	\providecommand\BibTeX{{%
			\normalfont B\kern-0.5em{\scshape i\kern-0.25em b}\kern-0.8em\TeX}}}
\newtheorem{theorem}{Theorem}[section]
\newtheorem{lemma}[theorem]{Lemma}
\theoremstyle{definition}
\newtheorem{definition}[theorem]{Definition}
\theoremstyle{remark}
\begin{document}
	
	\title{Learning Task-relevant Representations for Generalization via Characteristic Functions of Reward Sequence Distributions}
	
	
	\author{Rui Yang}
	\email{yr0013@mail.ustc.edu.cn}
	\affiliation{%
		\institution{University of Science and Technology of China}
		\country{}
	}
	
	\author{Jie Wang$^{\ast}$}
	\thanks{$\ast$ Corresponding Author}
	\email{jiewangx@ustc.edu.cn}
	\affiliation{%
		\institution{
			Institute of Artificial Intelligence\\
			Hefei Comprehensive National Science Center\\
			University of Science and Technology of China}
		\country{}
	}
	
	\author{Zijie Geng}
	\email{ustcgzj@mail.ustc.edu.cn}
	\affiliation{%
		\institution{University of Science and Technology of China}
		\country{}
	}
	
	\author{Mingxuan Ye}
	\email{mingxuanye@miralab.ai}
	\affiliation{%
		\institution{University of Science and Technology of China}
		\country{}
	}
	
	\author{Shuiwang Ji}
	\email{sji@tamu.edu}
	\affiliation{%
		\institution{Texas A\&M University}
		\city{College Station}
		\state{TX}
		\country{}
	}
	
	\author{Bin Li}
	\email{binli@ustc.edu.cn}
	\affiliation{%
		\institution{University of Science and Technology of China}
		\country{}
	}
	
	\author{Feng Wu}
	\email{fengwu@ustc.edu.cn}
	\affiliation{%
		\institution{University of Science and Technology of China}
		\country{}
	}
	
	\renewcommand{\shortauthors}{Rui Yang et al.}
	

	\begin{abstract}
		Generalization across different environments with the same tasks is critical for successful applications of visual reinforcement learning (RL) in real scenarios. However, visual distractions---which are common in real scenes---from high-dimensional observations can be hurtful to the learned representations in visual RL, thus degrading the performance of generalization. To tackle this problem, we propose a novel approach, namely \textbf{C}haracteristic \textbf{Re}ward \textbf{S}equence \textbf{P}rediction (\textbf{CRESP}), to extract the task-relevant information by learning reward sequence distributions (RSDs), as the reward signals are task-relevant in RL and invariant to visual distractions. Specifically, to effectively capture the task-relevant information via RSDs, CRESP introduces an auxiliary task---that is, predicting the characteristic functions of RSDs---to learn task-relevant representations, because we can well approximate the high-dimensional distributions by leveraging the corresponding characteristic functions. Experiments demonstrate that CRESP significantly improves the performance of generalization on unseen environments, outperforming several state-of-the-arts on DeepMind Control tasks with different visual distractions.
	\end{abstract}
	
	\begin{CCSXML}
		<ccs2012>
		<concept>
		<concept_id>10010147.10010257.10010258.10010261.10010272</concept_id>
		<concept_desc>Computing methodologies~Sequential decision making</concept_desc>
		<concept_significance>500</concept_significance>
		</concept>
		<concept>
		<concept_id>10010147.10010257.10010293.10010316</concept_id>
		<concept_desc>Computing methodologies~Markov decision processes</concept_desc>
		<concept_significance>300</concept_significance>
		</concept>
		<concept>
		<concept_id>10010147.10010178.10010224.10010240.10010241</concept_id>
		<concept_desc>Computing methodologies~Image representations</concept_desc>
		<concept_significance>500</concept_significance>
		</concept>
		</ccs2012>
	\end{CCSXML}
	
	\ccsdesc[500]{Computing methodologies~Sequential decision making}
	\ccsdesc[500]{Computing methodologies~Image representations}
	\ccsdesc[300]{Computing methodologies~Markov decision processes}
	
	\keywords{Task-relevant representation learning, reward sequence, characteristic function, generalization, visual reinforcement learning}

	\maketitle
	
	\begin{figure}[t]
		\begin{center}
			\centerline{\includegraphics[scale=0.45]{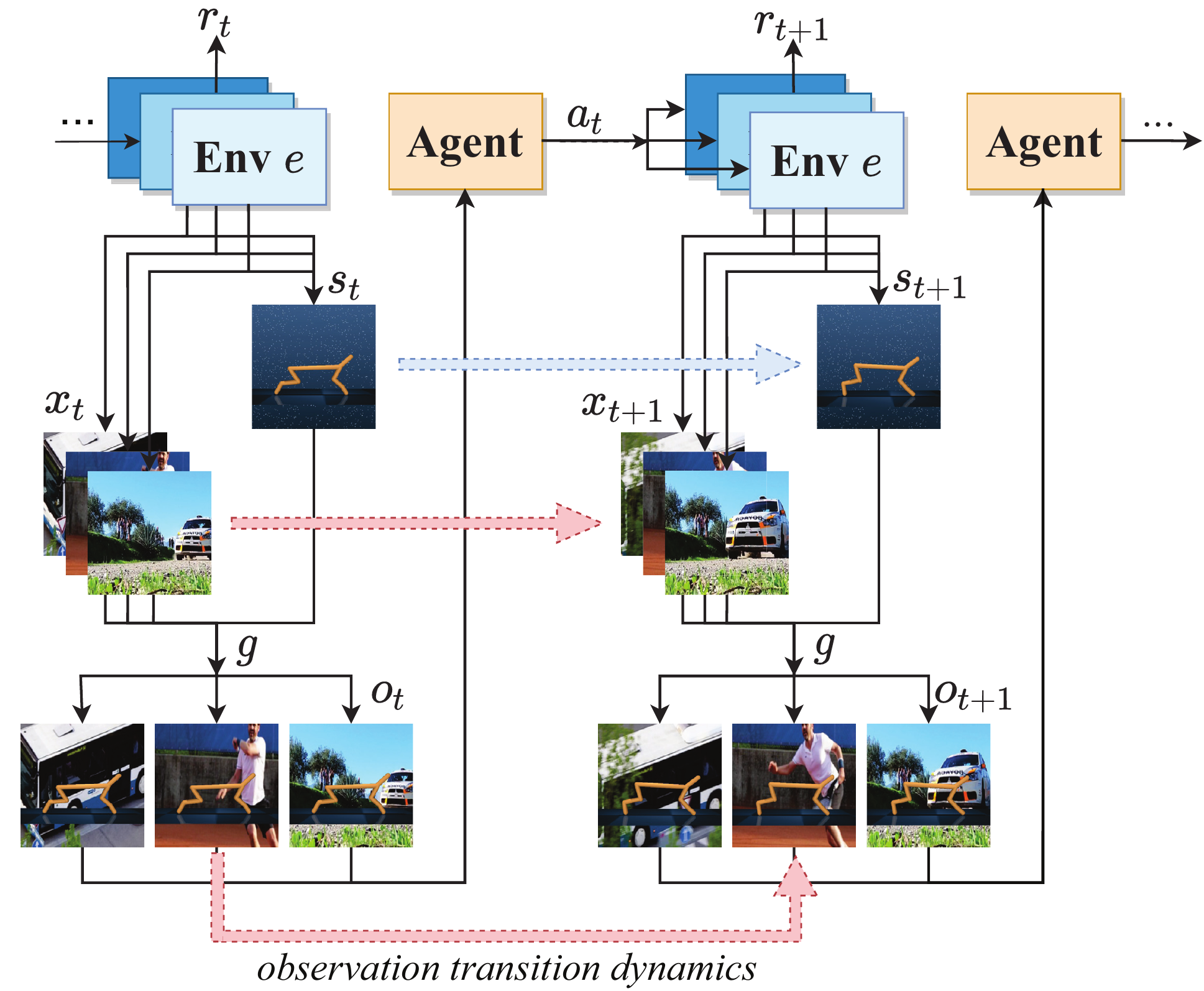}}
			\vskip -0.1in
			\caption{The agent-environment interactions in Block MDPs with visual distractions. Each environment $e$ provides a state $s_t$ and a background $x_t$, which generate an observation $o_t=g(s_t,x_t)$ through a nonlinear function $g$. The agent receives $o_t$ and takes an action $a_t$ in $e$, leading to the transitions of states (from $s_t$ to $s_{t+1}$), backgrounds (from $x_t$ to $x_{t+1}$), and thus the observation transitions (from $o_t$ to $o_{t+1}$). Notice that the red arrows represent the transitions that vary with different environments, while the blue arrow represents the transition invariant to environments.}
			\Description{This figure shows the agent-environment interactions in Block MDP settings with visual distractions.}
			\label{fig-rl}
		\end{center}
		\vskip -0.2in
		\vskip -0.1in
	\end{figure}
	
	\section{Introduction}
	\label{intro}
	
	Visual reinforcement learning (RL) algorithms aim to solve complex control tasks from high-dimensional visual observations. Notable successes include DrQ for locomotion control~\citep{iclr/YaratsKF21}, IMPALA for multi-task learning~\citep{icml/EspeholtSMSMWDF18}, and QT-Opt for robot grasping~\citep{corr/abs-1806-10293}. \mbox{Although} these methods perform well on training environments, they can hardly generalize to new environments, even these environ-ments are semantically similar to the training environments.
	This is because image observations often involve many task-irrelevant visual factors, such as dynamic backgrounds and colors of the object under control. Minor changes in such visual factors may cause large distributional shifts of the environments, which prevent the agent from extracting underlying task-relevant information when we put it into a new environment. This indicates that many existing RL agents memorize the trajectories on specific environments~\citep{nips/RajeswaranLTK17,iclr/SongJTDN20}, rather than learning transferable skills.
	
	To learn a policy with transferable skills for generalization, many prior works focus on learning representations that encode only the task-relevant information while discarding task-irrelevant \mbox{visual} factors.
	Some of them propose similarity metrics~\cite{iandc/LarsenS91,aaai/Castro20} to find semantically equivalent observations for representation learning~\citep{iclr/0001MCGL21,iclr/AgarwalMCB21}.
	Others design objectives by integrating MDP properties to learn a causal representation that is invariant to irrelevant \mbox{features}~\citep{abs-2106-00808,icml/0001LSFKPGP20}.
	These aforementioned methods leverage rewards and transition dynamics to capture task-relevant features. However, the observation transition dynamics (see Figure~\ref{fig-rl}) may induce the task-irrelevant information relating to visual distractions into the representations, thus hindering generalization~\citep{icml/0001LSFKPGP20, l4dc/SonarPM21}. Detailed discussions are in Section~\ref{subsec-4.1}.
	
	In contrast to the above methods, we propose a novel approach, namely \textbf{C}haracteristic \textbf{Re}ward \textbf{S}equence \textbf{P}rediction (CRESP), which only uses reward signals but observation transition dynamics to learn task-relevant representations, as the reward signals are task-relevant in RL and invariant to visual factors.
	To preserve information that is relevant to the task, CRESP introduces the \textit{reward sequence distributions} (RSDs), which are the conditional distributions of reward sequences given a starting \mbox{observation} and various subsequent actions.
	CRESP leverages RSDs to learn a task-relevant representation that only encodes the information of RSDs, which we call \textit{reward sequence representation}.
	Specifically, considering that the characteristic function can specify high-dimensional distributions~\citep{ansari2020characteristic}, we propose to learn such task-relevant representation by an auxiliary task that predicts the \textit{characteristic functions} of RSDs.
	Moreover, we provide a theoretical analysis of the value bounds between the true optimal value functions and the optimal value functions on top of the reward sequence representations.
	Experiments on DeepMind Control Suite~\citep{corr/abs-1801-00690} with visual distractors~\citep{corr/abs-2101-02722} demonstrate that CRESP significantly improves several state-of-the-arts on unseen environments.
	
	Our main contributions in this paper are as follows:
	\begin{itemize}
		\vskip -0.0in
		\vskip -0.0in
		\item We introduce the reward sequence distributions (RSDs) to discard the task-irrelevant features and preserve the task-relevant features.
		\item We propose CRESP, a novel approach that extracts the task-relevant information by learning the characteristic functions of RSDs for representation learning.
		\item Experiments demonstrate that the representations learned by CRESP preserve more task-relevant features than prior methods, outperforming several state-of-the-arts on the majority of tasks by substantial margins.
	\end{itemize}
	
	\section{Related Work}
	\paragraph{Generalization in visual RL}
	The study of generalization in deep RL focuses on the capability of RL methods to generalize to unseen environments under a limited set of training environments.
	Several works propose to apply regularization techniques originally developed for supervised learning, including dropout~\citep{nips/IglCLTZDH19} and batch normalization~\citep{farebrother2018generalization, nips/IglCLTZDH19}.
	Although practical and easy to implement, these methods do not exploit any properties of sequential decision-making problems.
	Other approaches for preventing overfitting focus on data augmentation~\citep{ye2020rotation, lee2020network, laskin2020reinforcement, raileanu2021automatic}, which enlarge the available data space and implicitly provide the prior knowledge to the agent.
	Although these methods show promising results in well-designed experimental settings, strong assumptions such as prior knowledge of the testing environments may limit their real applications. 
	In contrast to these methods, we consider a more realistic setting without assuming this prior knowledge of environments.
	
	\paragraph{Representation Learning in visual RL}
	Many prior works focus on representation learning for generalization in visual RL.
	Some of the works~\citep{lange2010deep,lange2012autonomous} use a two-step learning process, which first trains an auto-encoder by using a reconstruction loss for low-dimensional representations, and then uses this representation for policy optimization.
	However, such representations encode all elements from observations, whether they are relevant to the task or not.
	Other works use bisimulation metrics to learn a representation that is invariant to irrelevant visual features~\citep{iclr/0001MCGL21}.
	However, such methods use the transition dynamics, which vary with the environments, leading to the learned representation involving task-irrelevant features of the visual distractions.
	A recent study~\citep{LehnertLF20} leverages the reward prediction for representation learning. However, the representation learning method only considers finite MDPs, which cannot extend to visual RL tasks.
	
	\paragraph{Characteristic Functions of Random Variables}
	Characteristic functions are the Fourier transforms of probability density functions.
	They are well studied in probability theory and can be used to specify high-dimensional distributions.
	This is because two random variables have the same distribution if and only if they have the same characteristic function.
	Some prior works~\cite{ansari2020characteristic,yu2004empirical} use characteristic functions to solve some statistical problems.
	We leverage this tool for a simple and tractable approximation of high-dimensional distributions.
	Our experiments demonstrate that the characteristic functions perform well to specify the distributions of reward sequences in our method.

	\section{Preliminaries}
	\label{sec:pre}
	In visual RL tasks, we deal with high-dimensional image observations, instead of the states as the inputs. We consider a family of environments with the same high-level task but different visual distractions.
	Denote $\mathcal{E}$ as the set of these environments. We model each environment $e\in\mathcal{E}$ as a Block Markov Decision Process (BMDP)~\citep{du2019provably,icml/0001LSFKPGP20}, which is described by a tuple $\mathcal{M}^e=(\mathcal{S}, \mathcal{O}, \mathcal{A},\mathcal{R}, p,p^e,\gamma)$. Here $\mathcal{S}$ is the state space,\, $\mathcal{O}$ is the observation space, $\mathcal{A}$ is the action space,\, $\mathcal{R}$ is the reward space, which we assume to be bounded,\, $p(s',r|s,a)$ is the state transition probability, $p^e(o',r|o,a)$ is the observation transition probability, which varies with environments $e\in\mathcal{E}$, and $\gamma \in [0,1)$ is the discount factor.
	
	At each time step $t$, we suppose that the environment is in a state $S_t$.\footnote{Throughout this paper, we use uppercase letters such as $S_t$ and $O_t$ to denote random variables, and use lowercase letters such as $s_t$ and $o_t$ to denote the corresponding values that the random variables take.}
	The agent, instead of directly achieving $S_t$, obtains an observa-tion $O_t$ on environment $e\in\mathcal{E}$.
	It is reasonable to assume that the observation is determined by the state and some task-irrelevant visual factors that vary with environments, such as backgrounds or agent colors in DeepMind Control tasks. Symbolically, let $\mathcal{X}$ be the set of such visual factors.
	We suppose that there exists an \textit{observation function} $g:\mathcal{S}\times\mathcal{X}\to\mathcal{O}$ \citep{iclr/SongJTDN20,du2019provably} such that $O_t=g(S_t,X_t)$, where $X_t$ is a random variable in $\mathcal{X}$, independent with $S_t$ and $A_t$, with a transition probability $q^e(x'|x)$.
	See  Figure~\ref{fig-rl} for an illustration.
	We aim to find a policy $\pi (\cdot|o_t)$ that maximizes the expected accumulated reward $\mathbb{E}^e\left[\sum_{t=0}^\infty \gamma^t R_t\right]$ simultaneously in all environments $e\in\mathcal{E}$, where $\mathbb{E}^e[\cdot]$ means that the expectation is taken in the environment $e$.
	
	Moreover, we assume that the environments follow a generalized Block structure~\citep{icml/0001LSFKPGP20,du2019provably}. 
	That is, an observation $o\in\mathcal{O}$ uniquely determines its generating state $s$, and the visual factor $x$.
	This \mbox{assumption} implies that the observation function $g(s,x)$ is invertible with respect to both $s$ and $x$.
	For simplicity, we denote $s=[o]_s$ and $x=[o]_x$ as the generating state and visual factor, respectively.
	Furthermore, we have $p^e(o',r|o,a)=p(s',r|s,a)q^e(x'|x)$, where $s=[o]_s, s'=[o']_s$, $x=[o]_x$ and $x'=[o']_x$.

	\section{Representation Learning via Reward Sequence Distributions}

	\label{sec-4}
	An encoder, or a representation, refers to an embedding function $\Phi:\mathcal{O}\to\mathcal{Z}$, which maps the observational inputs onto a latent state representation space $\mathcal{Z}$. 
	Our goal is to find a suitable representation that encodes only task-relevant information and is invariant to visual distractions.
	In Section~\ref{subsec-4.1}, we discuss the notion of task relevance in visual RL, introduce reward sequence distributions (RSDs), and formulate the reward sequence representations for generalization.
	In Section~\ref{subsec-4.2}, we provide a theoretical analysis that reformulates the reward sequence representation via the characteristic functions of RSDs.
	In Section~\ref{subsec-4.3}, we present a practical method, based on the prediction of characteristic functions of RSDs, to learn such a reward sequence representation.

	\subsection{Task-relevant Invariance in Visual RL}
	\label{subsec-4.1}
	The key idea of our approach is to capture the task-relevant infor-mation across different environments from observations, and leverage such information for representation learning to improve the performance of generalization.
	
	Reward signals and transition dynamics are major properties of MDP, which are commonly used for representation learning in visual RL.
	We start with a discussion on the distractions induced by observation transition dynamics.
	In visual RL, we can hardly learn about the state transition dynamics, as the state space is unavailable in practice.
	Instead, many methods learn the observation transition dynamics by a probabilistic dynamics model~\citep{iclr/0001MCGL21,corr/abs-1807-03748,corr/abs-2112-10504}.
	However, the observation transition dynamics are relevant to the visual factors because they comprise the transition dynamics of both states and task-irrelevant visual factors.
	Formally, we have the reward and observation transition dynamics $p^e(o',r|o,a)=p(s',r|s,a)q^e(x'|x)$. 
	This formula shows that the observation transition probability varies with the environment $e\in\mathcal{E}$.
	We present a case in Figure~\ref{fig-rl} to illustrate the observation transition dynamics.
	Therefore, representations that encode information about observation transition dynamics are subject to visual distractions and have difficulty learning transferable skills.

	\begin{figure}
		\begin{center}
			\centerline{\includegraphics[scale=0.25]{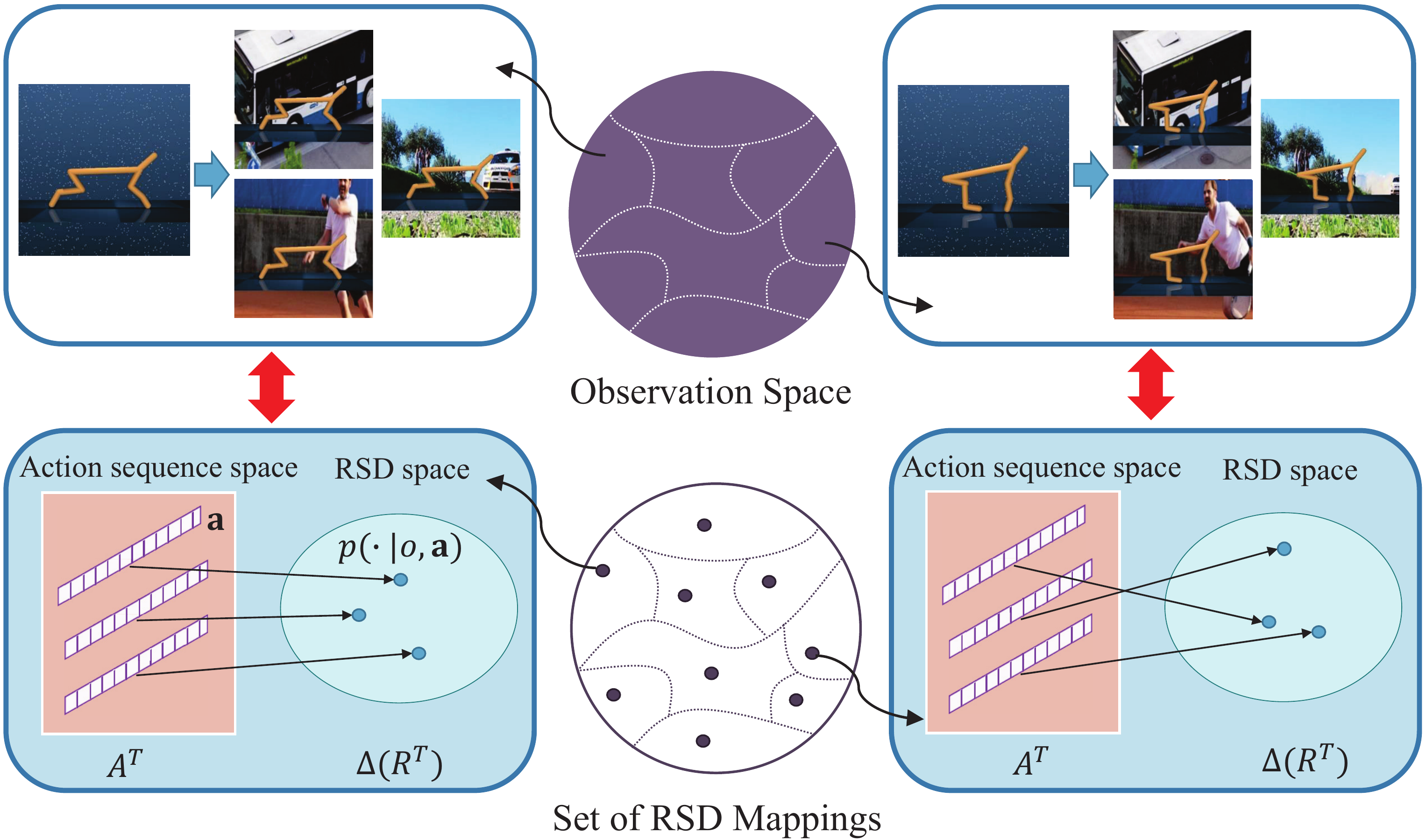}}
			\vskip -0.1in
			\caption{The relationship between observations and RSD mappings.
				We can divide the observation space into different equivalence classes, where
				the equivalent observations are generated from the same state. 
				Each equivalence class corresponds to a same mapping from action sequences $\mathbf{a}\in\mathcal{A}^T$ to reward sequence distributions $p(\cdot|o,\mathbf{a})\in \Delta(\mathcal{R}^T)$.}
			\label{fig-mappings}
			\Description{This figure shows the relationship between the observations and the mappings from action sequences to the corresponding reward sequence distributions.}
		\end{center}
		\vskip -0.1in
		\vskip -0.1in
	\end{figure}
	
	In contrast to observation transition dynamics, the distributions of reward signals are relevant to the RL tasks and are invariant to visual distractions.
	Formally, if two observations $o$ and $o'$ are generated by the same state $s$, i.e., $[o]_s=[o']_s$, then we have $p^e(r|o,a)=p^e(r|o',a)$ for any $a\in\mathcal{A}$ and $e\in\mathcal{E}$.
	This motivates us to use the reward signals instead of observation transition dynamics for representation learning.
	As our goal is to maximize the expected accumulative rewards, what we need is not only the current reward but also the sequences of future rewards.
	Therefore, we propose to utilize the reward sequences for representation learning.
	
	For a mathematical formulation, we introduce some new nota-tions.
	We denote $\mathcal{A}^T=\{\mathbf{a}=(a_1,\cdots, a_T):a_i\in\mathcal{A}\}$ and $\mathcal{R}^T=\{\mathbf{r}=(r_1,\cdots,r_{T}):r_i\in\mathcal{R}\}$ as the spaces of action sequences and reward sequences with length $T$, respectively.
	Let $\Delta (\mathcal{R}^T)$ be the set of probability distributions over $\mathcal{R}^T$.
	At each time step $t$, the sequence of the subsequent actions $\mathbf{A}_t^T=(A_t,\cdots,A_{t+T-1})$ is a $T$-dimensional random vector over $\mathcal{A}^T$. The sequence of the subsequent rewards $\mathbf{R}_{t+1}^T=(R_{t+1},\cdots,R_{t+T})$ is a $T$-dimensional random vector over $\mathcal{R}^T$.
	\footnote{We use bold uppercase letters such as $\mathbf{A}$ and $\mathbf{R}$ to denote random vectors in high-dimensional spaces and use bold lowercase letters such as $\mathbf{a}$ and $\mathbf{r}$ to denote deterministic vectors in such spaces.}

	To clarify our idea, we first consider a deterministic environment.
	Starting from an observation $o_t\in\mathcal{O}$, with the corresponding state $s_t = [o_t]_s\in\mathcal{S}$, suppose that we perform a given action sequence $\mathbf{a}_t^T=(a_t,\cdots,a_{t+T-1})\in\mathcal{A}^T$ and receive a reward sequence $\mathbf{r}_{t+1}^T=(r_{t+1},\cdots,r_{t+T})\in\mathcal{R}^T$ from the environment.
	This reward sequence $\mathbf{r}_{t+1}^T$ is uniquely determined by the starting state $s_t$ and the given action sequence $\mathbf{a}_t^T$.
	Therefore, we can find that the relationship between the given action sequence $\mathbf{a}_t^T$ and the received reward sequence $\mathbf{r}_{t+1}^T$ is invariant to visual distractions.
	We can use such a relationship to identify the task-relevant information from observations.
	To formulate this relationship, we consider the mappings from action sequences $\mathbf{a}\in\mathcal{A}^T$ to the corresponding reward sequences $\mathbf{r}\in\mathcal{R}^T$---that the agent receives from an observation $o$ by following the action sequence $\mathbf{a}$.
	We consider two observations $o$ and $o'$ that have same mappings from $\mathbf{a}\in\mathcal{A}^T$ to $\mathbf{r}\in\mathcal{R}^T$ for any dimension $T$.
	In other words, we suppose that the agent receives the equal reward sequence $\mathbf{r}\in\mathcal{R}^T$ from $o$ and $o'$, when it follows any action sequence $\mathbf{a}\in\mathcal{A}^T$ for any $T$.
	Then the two observations have similar task properties, in the sense that the agent will receive the equal accumulative rewards from $o$ and $o'$ no matter what actions the agent takes.
	Therefore, the mappings from action sequences $\mathbf{a}\in\mathcal{A}^T$ to the corresponding reward sequences $\mathbf{r}\in\mathcal{R}^T$ can be used to identify the task-relevant information from the observations.
	
	We then consider the stochastic environment, the case of which is similar to the deterministic environment. 
	In the stochastic environment, the reward sequence $\mathbf{R}_{t+1}^T$ is random even for fixed observation $o_t$ and action sequence $\mathbf{A}_t^T$.
	Therefore, we cannot simply consider the mappings from $\mathcal{A}^T$ to $\mathcal{R}^T$.
	Instead, we apply the mappings from $\mathcal{A}^T$ to $\Delta(\mathcal{R}^T)$, which map the action sequences to the distributions of the sequences of reward random variables.
	
	Formally, let $p(\mathbf{r}|o,\mathbf{a})$ be the probability density function of the random vector $\mathbf{R}_{t+1}^T$ at the point $\mathbf{r}\in\mathcal{R}^T$, conditioned on the starting observation $O_t=o$ and the action sequence $\mathbf{A}_t^T=\mathbf{a}\in\mathcal{A}^T$.
	For any $o\in\mathcal{O}, \mathbf{a}=(a_1,\cdots,a_T)$, and $\mathbf{r}=(r_2,\cdots,r_{T+1})$, we have
	\begin{align*}
		p(\mathbf{r}|o,\mathbf{a})=p(r_2|s,a_1)p(r_3|s,a_1,a_2)\cdots p(r_{T+1}|s,a_1,\cdots,a_T),
	\end{align*}
	where $s=[o]_s$, and $p(r|s,a_1,\cdots,a_t)$ denotes the probability density function of the reward $r$ that the agent receives, after following an action sequence $(a_1,\cdots,a_t)$, starting from the \mbox{state $s$}.
	Furthermore, for any $o,o'\in\mathcal{O}$ such that $[o]_s=[o']_s$, we have $p(\mathbf{r}|o,\mathbf{a})=p(\mathbf{r}|o',\mathbf{a})$.
	The formulas imply that the conditional distributions $p(\cdot|o,\mathbf{a})$ of reward sequences are determined by the generating states of the observations as well as the action sequences.
	Therefore, the mappings from the action sequences $\mathbf{a}\in\mathcal{A}^T$ to the corresponding RSDs $p(\cdot|o,\mathbf{a})$ are task-relevant and invariant to visual distractions.
	Thus we can use the mappings to determine task relevance.
	See Figure~\ref{fig-mappings} for an illustration.
	
	The analysis above motivates our method that leverages the RSDs to learn representations.
	Specifically, we learn a representation that can derive a function, which maps the action sequences to the corresponding RSDs.
	Formally, we define the $T$-level reward sequence representation as follows.
	\begin{definition}
		A representation $\Phi:\mathcal{O}\to\mathcal{Z}$ is a \textit{$T$-level reward sequence representation} if it can derive the distribution of any reward sequence received from any observation by following any action sequence with length $T$, i.e., there exists $f$ such that
		\begin{align*}
			f(\mathbf{r};\Phi(o),\mathbf{a}) = p(\mathbf{r}|o,\mathbf{a}),\, \forall\, \mathbf{r}\in\mathbf{R}^T,o\in\mathcal{O},\mathbf{a}\in\mathcal{A}^T.
		\end{align*}
	\end{definition}
	
	Intuitively, the $T$-level reward sequence representation encodes the task-relevant information about the relation between the action sequences $\mathbf{a}$ and the RSDs $p(\mathbf{r}|o,\mathbf{a})$ in the next $T$ steps.
	Notice that a $T$-level reward sequence representation is also a $T'$-level reward sequence representation, where $T,T'\in\mathbb{N}^*$ and $T>T'$.
	If $T$ tends to infinity, the representation will encode all task-relevant information from the objective of RL tasks.
	This derives the following definition.
	
	\begin{definition}
		A representation $\Phi:\mathcal{O}\to\mathcal{Z}$ is a \textit{reward sequence representation} if it is a $T$-level reward sequence representation for all $T\in\mathbb{N}^*$.
	\end{definition}
	
	The reward sequence representation is equivalent to a $\infty$-level reward sequence representation.
	In practice, we learn a finite $T$-level reward sequence representation as an approximation of the reward sequence representation.
	To provide a theoretical guarantee for the approximation, the following theorem gives a value bound between the true optimal value function and the value function on top of the $T$-level reward sequence representation.
	
	\begin{theorem}
		\label{thm:bd}
		Let $\Phi:\mathcal{O}\to\mathcal{Z}$ be a $T$-level representation, $V^e_*:\mathcal{O}\to\mathbb{R}$ be the optimal value function in the environment $e\in\mathcal{E}$, $\Bar{V}^e_*:\mathcal{Z}\to\mathbb{R}$ be the optimal value function on the latent representation space, built on top of the representation $\Phi$. Let $\bar{r}$ be a bound of the reward space, i.e., $|r|<\bar{r}$ for any $r\in\mathcal{R}$.
		Then we have
		\begin{align*}
			0\le V^e_*(o)-\Bar{V}^e_*\circ\Phi(o)\le\frac{2\gamma^T}{1-\gamma}\bar{r},
		\end{align*}
		for any $o\in\mathcal{O}$ and $e\in\mathcal{E}$.
	\end{theorem}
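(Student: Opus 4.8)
The plan is to prove both inequalities by separating each optimal value function into the discounted sum of its first $T$ rewards and the discounted tail accruing from step $T$ onward, and then exploiting the defining property of the representation: by the definition of a $T$-level reward sequence representation, $\Phi(o)$ fixes the conditional law $p(\mathbf{r}\mid o,\mathbf{a})$ of the first $T$ rewards under every action sequence $\mathbf{a}\in\mathcal{A}^T$, so the expected discounted return of the first $T$ steps depends on $o$ only through $\Phi(o)$. Consequently the entire gap between $V^e_*$ and $\bar{V}^e_*\circ\Phi$ can be charged to the tail beyond step $T$, which is where the factor $\gamma^T$ and the reward bound $\bar{r}$ enter.

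For the lower bound $V^e_*(o)-\bar{V}^e_*(\Phi(o))\ge 0$, I would argue that $\bar{V}^e_*$ is an optimum computed from the coarser $\Phi$-information: every policy and value estimate available in the latent space lifts to an observation-measurable object in the true MDP, whose first-$T$-step contribution is unchanged by the matching of reward laws above. Thus the true Bellman-optimal value, which optimizes over the strictly richer class of observation-measurable policies, dominates the latent optimum; making this precise amounts to showing that $\bar{V}^e_*\circ\Phi$ is a feasible (sub-optimal) value in the true optimality equation, so that the fixed point $V^e_*$ lies above it.

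For the upper bound I would use a mimicking argument. Fix the true optimal policy $\pi^\ast$ attaining $V^e_*(o)$ and build a latent policy that reproduces the action-sequence law of $\pi^\ast$ over the first $T$ steps; since the first-$T$ reward laws coincide, this latent policy collects the same expected discounted first-$T$ reward as $\pi^\ast$. Its latent tail is at least $-\tfrac{\gamma^T}{1-\gamma}\bar{r}$ because $|r|<\bar{r}$, while the tail of $\pi^\ast$ is at most $+\tfrac{\gamma^T}{1-\gamma}\bar{r}$ for the same reason. Subtracting the two decompositions, the first-$T$ terms cancel and the two tail bounds combine to give $V^e_*(o)-\bar{V}^e_*(\Phi(o))\le\tfrac{2\gamma^T}{1-\gamma}\bar{r}$, which is exactly where the constant $2$ originates.

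The step I expect to be the main obstacle is making $\bar{V}^e_*$ rigorous and reconciling it with the clean one-sided lower bound. The representation guarantees equality of reward-sequence distributions but not of the latent transition law---$\Phi(o)=\Phi(o')$ need not yield a single consistent next-latent-state distribution---so the latent optimality equation must be set up with care. I would lean on the fact that a $T$-level representation is simultaneously a $T'$-level representation for every $T'\le T$, which lets me telescope the first $T$ one-step reward models exactly and confine every mismatch to the $\gamma^T$-discounted tail. Controlling that tail uniformly in $o$ and $e$, and keeping the latent policy genuinely $\Phi$-measurable throughout the mimicking construction, is the delicate part.
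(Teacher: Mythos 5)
Your proposal is correct and follows essentially the same route as the paper's proof: the lower bound comes from latent policies being a restricted class, and the upper bound from decomposing returns into the first $T$ steps plus a tail, mimicking the optimal policy through a representative observation in each $\Phi$-class (so the first-$T$ reward contributions cancel by the matching reward-sequence laws), and bounding each tail by $\frac{\gamma^T}{1-\gamma}\bar{r}$ to produce the factor of $2$. The subtlety you flag about setting up $\bar{V}^e_*$ rigorously is also present in (and glossed over by) the paper's own argument.
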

	\begin{proof}
		See Appendix~\ref{app-thm:bd}
	\end{proof}

	\subsection{Characteristic Functions for Representation Learning}
	\label{subsec-4.2}
	In Section \ref{subsec-4.1}, we formulate the $T$-level reward sequence representation that can derive the probability density function $p(\mathbf{r}|o,\mathbf{a})$, 
	where $\mathbf{r}\in\mathcal{R}^T$ is a reward sequence, $o\in\mathcal{O}$ is an observation, and $\mathbf{a}\in\mathcal{A}^T$ is an action sequence.
	However, learning the probability density functions is usually technically impractical~\citep{ansari2020characteristic}.
	Leveraging the characteristic function of random vectors, we propose an alternative approach, which is simple to implement and effective to learn the distributions.

	\begin{figure*}
		\begin{center}
			\vskip -0.05in
			\centerline{\includegraphics[scale=0.36]{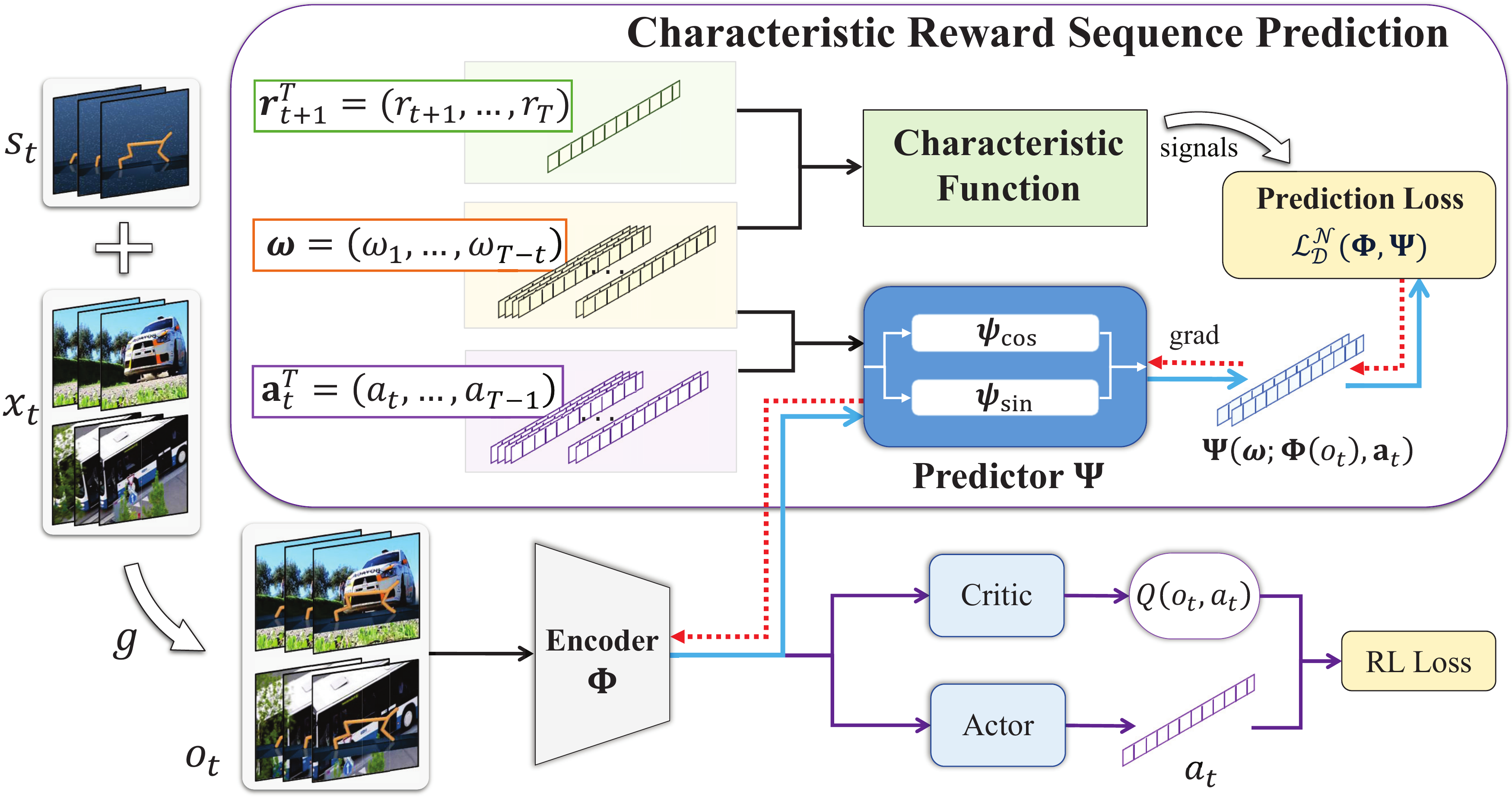}}
			\caption{The overall architecture of CRESP.
				CRESP minimizes the prediction loss to train an encoder $\Phi$, and simultaneously uses $\Phi$ to learn a policy in an actor-critic setting.
				In the prediction task, CRESP predicts the characteristic functions of reward sequence distributions through the encoder $\Phi$ and the predictor $\Psi$ (in the purple box).
				The prediction loss $\mathcal{L^N_D}(\Phi, \Psi)$ provides the gradients (red lines) to update both the predictor $\Psi$ and the encoder $\Phi$.
				Here $\mathbf{r}^T_{t+1}$ and $\mathbf{a}^T_t$ are the sequences drawn from a replay buffer $\mathcal{D}$.
				The inputs $\bm{\omega}$ of characteristic functions are sampled from a Gaussian distribution $\mathcal{N}$.}
			\label{fig-arc}
		\end{center}
		\Description{This figure shows the overall architecture of our approach, CRESP, which predicts the characteristic functions of reward sequence distributions for representation learning, and simultaneously uses the representation for the RL task. }
		\vskip -0.1in
	\end{figure*}

	Consider a random vector $\mathbf{R}$ defined on the space $\mathcal{R}^T$, with a probability density function $p_{\mathbf{R}}(\cdot)$. The characteristic function $\varphi_{\mathbf{R}}:\mathbb{R}^T\to\mathbb{C}$ of $\mathbf{R}$ is defined as
	\begin{align*}
		\varphi_\mathbf{R} (\bm{\omega})=\mathbb{E}_{\mathbf{R}\sim p_{\mathbf{R}}(\cdot)}\left[e^{i\langle\bm{\omega},\mathbf{R}\rangle}\right]=\int e^{i\langle\bm{\omega},\mathbf{r}\rangle}p_{\mathbf{R}}(\mathbf{r})\mathrm{d}\mathbf{r},
	\end{align*}
	where $\bm{\omega}\in\mathbb{R}^T$ denotes the input of $p_{\mathbf{R}}(\cdot)$, and $i=\sqrt{-1}$ is the imaginary unit.
	Since we consider discounted cumulative rewards in RL tasks, we use $\langle\cdot,\cdot\rangle$ to denote the weighted inner product in $\mathbb{R}^T$, i.e., $\langle \bm{\omega}, \mathbf{r}\rangle=\sum_{t=1}^T \gamma^t\omega_tr_t$, where $\gamma$ is the discounted factor.
	
	Characteristic functions are useful tools well studied in probability theory.
	In contrast to the probability density function, the characteristic function has some good basic properties.
	1) $\left|\varphi_{\mathbf{R}}(\bm{\omega})\right|\le\mathbb{E}_{\mathbf{R}\sim p_{\mathbf{R}}(\cdot)}\left|e^{i\langle\bm{\omega},\mathbf{R}\rangle}\right|=1,$ which indicates that the characteristic function always exists and is uniformly bounded.
	2) The characteristic function $\varphi_\mathbf{R}$ is uniformly continuous on $\mathbb{R}^T$, which makes it tractable for learning.
	
	The following lemma states a fact that the distribution of a random vector can be specified by its characteristic function.
	
	\begin{lemma}~\citep{grimmett2020probability}
		\label{lem:cf}
		Two random vectors $\mathbf{X}$ and $\mathbf{Y}$ have the same characteristic function if and only if they have the same probability distribution function.
	\end{lemma}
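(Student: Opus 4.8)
The plan is to prove the lemma as the classical uniqueness theorem for characteristic functions, working at the level of the induced Borel probability measures rather than densities, since a density need not exist. Let $\mu$ and $\nu$ denote the distributions of $\mathbf{X}$ and $\mathbf{Y}$ on $\mathbb{R}^T$, so that $\varphi_{\mathbf{X}}(\bm{\omega})=\int_{\mathbb{R}^T}e^{i\langle\bm{\omega},\mathbf{x}\rangle}\,\mathrm{d}\mu(\mathbf{x})$ and likewise for $\nu$. The forward implication is immediate: if $\mu=\nu$, then for every $\bm{\omega}$ the two defining integrals coincide, so $\varphi_{\mathbf{X}}=\varphi_{\mathbf{Y}}$. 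All the work is in the converse.

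For the converse, I would show that equality of the characteristic functions forces $\int f\,\mathrm{d}\mu=\int f\,\mathrm{d}\nu$ for a class of test functions rich enough to determine a finite Borel measure. First note that the weighted inner product $\langle\bm{\omega},\mathbf{r}\rangle=\sum_{t=1}^T\gamma^t\omega_t r_t$ used here is just the standard inner product composed with the invertible linear map $\bm{\omega}\mapsto(\gamma\omega_1,\dots,\gamma^T\omega_T)$ (the weights $\gamma^t$ being strictly positive for $\gamma>0$), so as $\bm{\omega}$ ranges over $\mathbb{R}^T$ the exponentials $e^{i\langle\bm{\omega},\cdot\rangle}$ are exactly the usual Fourier characters and no generality is lost. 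Now take any Schwartz function $f$ on $\mathbb{R}^T$ with Fourier transform $\hat{f}$. Writing $f$ through Fourier inversion as a superposition of characters and applying Fubini's theorem gives
\begin{align*}
	\int_{\mathbb{R}^T}f\,\mathrm{d}\mu=\frac{1}{(2\pi)^T}\int_{\mathbb{R}^T}\hat{f}(\bm{\omega})\,\varphi_{\mathbf{X}}(\bm{\omega})\,\mathrm{d}\bm{\omega},
\end{align*}
for the appropriate normalization of $\hat{f}$, and the identical identity holds with $\mu,\varphi_{\mathbf{X}}$ replaced by $\nu,\varphi_{\mathbf{Y}}$. Since $\varphi_{\mathbf{X}}=\varphi_{\mathbf{Y}}$ by hypothesis, the two right-hand sides agree, whence $\int f\,\mathrm{d}\mu=\int f\,\mathrm{d}\nu$.

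To finish, I would invoke the fact that Schwartz functions (or, after a further approximation, all of $C_c(\mathbb{R}^T)$) are measure-determining for finite Borel measures: two such measures agreeing against every $f$ in this class must coincide on all Borel sets, by the Riesz representation theorem together with outer regularity. Hence $\mu=\nu$, i.e.\ $\mathbf{X}$ and $\mathbf{Y}$ have the same distribution.

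The main obstacle is the converse direction, and within it the two analytic steps that the textbook citation compresses: justifying the Fubini interchange (which is clean here because $\hat{f}\in L^1$ for Schwartz $f$ and $|\varphi|\le1$, so the integrand is absolutely integrable) and establishing that the test-function class is measure-determining without assuming that densities exist. An alternative that sidesteps Schwartz theory is L\'evy's inversion formula, which recovers the $\mu$-measure of rectangles directly from $\varphi$ as a principal-value integral; I would reach for that if I wanted a fully self-contained argument, but the test-function route is shorter and suffices here.
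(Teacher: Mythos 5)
Your proof is correct, and there is nothing in the paper to compare it against: the paper states this lemma with a citation to Grimmett and Stirzaker and gives no proof of its own, so your argument is the standard uniqueness theorem for characteristic functions supplied in full where the paper only quotes it. You handle the two points the citation glosses over correctly. First, you work at the level of induced Borel measures rather than densities, which matters because the lemma as stated (and the surrounding text, which speaks of probability density functions) would otherwise silently assume densities exist. Second, you notice that the paper's pairing $\langle\bm{\omega},\mathbf{r}\rangle=\sum_{t=1}^T\gamma^t\omega_t r_t$ is not the standard inner product and reduce it to the standard one via the invertible diagonal map $\bm{\omega}\mapsto(\gamma\omega_1,\dots,\gamma^T\omega_T)$. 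The one caveat worth recording is that this reduction requires $\gamma>0$: the paper allows $\gamma\in[0,1)$, and at $\gamma=0$ the "characteristic function" built from this weighted pairing is identically $1$ and determines nothing. That is a degenerate corner of the paper's own conventions rather than a gap in your argument, but it is the only place where the lemma, read literally with the paper's redefined inner product, could fail. Both of your proposed routes for the converse --- Fourier inversion against Schwartz test functions with the Fubini interchange justified by $\hat{f}\in L^1$ and $|\varphi|\le 1$, or L\'evy's inversion formula --- are standard and complete.
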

	
	This lemma implies that we can recapture the information about the distributions of random vectors via their characteristic functions.
	Therefore, instead of learning the conditional density functions of reward sequences that are intractable, we propose to leverage characteristic functions of the RSDs for representation learning.
	Specifically, we have the following theorem.
	\begin{theorem}
		\label{thm:cf}
		A representation $\Phi:\mathcal{O}\to\mathcal{Z}$ is a $T$-level reward sequence representation if and only if there exits a predictor $\Psi$ such that for all $\bm{w}\in\mathbb{R}^{T},o\in\mathcal{O}$ and $\mathbf{a}\in\mathcal{A}^T$,
		\begin{align*}
			\Psi(\bm{\omega};\Phi(o),\mathbf{a})=\varphi_{\mathbf{R}|o,\mathbf{a}}(\bm{\omega})=\mathbb{E}_{\mathbf{R}\sim p(\cdot|o,\mathbf{a})}\left[e^{i\langle\bm{\omega},\mathbf{R}\rangle}\right].
		\end{align*}
	\end{theorem}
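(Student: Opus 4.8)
The plan is to establish the equivalence by exploiting the bijective correspondence between densities and characteristic functions furnished by Lemma~\ref{lem:cf}. Since the characteristic function $\varphi_{\mathbf{R}|o,\mathbf{a}}$ is precisely the (weighted) Fourier transform of the density $p(\cdot|o,\mathbf{a})$, each direction of the iff amounts to transporting the reconstruction property of $\Phi$ through this transform.

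For the forward direction ($\Rightarrow$), I would assume $\Phi$ is a $T$-level reward sequence representation, so that some $f$ satisfies $f(\mathbf{r};\Phi(o),\mathbf{a})=p(\mathbf{r}|o,\mathbf{a})$ for all $\mathbf{r}\in\mathcal{R}^T, o\in\mathcal{O}, \mathbf{a}\in\mathcal{A}^T$. The natural candidate predictor is the Fourier transform of $f$ in its density argument, namely $\Psi(\bm{\omega};z,\mathbf{a}):=\int e^{i\langle\bm{\omega},\mathbf{r}\rangle}f(\mathbf{r};z,\mathbf{a})\,\mathrm{d}\mathbf{r}$, which is well-defined for any $z\in\mathcal{Z}$. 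Substituting $z=\Phi(o)$ and invoking the reconstruction identity yields $\Psi(\bm{\omega};\Phi(o),\mathbf{a})=\int e^{i\langle\bm{\omega},\mathbf{r}\rangle}p(\mathbf{r}|o,\mathbf{a})\,\mathrm{d}\mathbf{r}$, which is exactly $\varphi_{\mathbf{R}|o,\mathbf{a}}(\bm{\omega})$ by the definition of the characteristic function. This direction is essentially immediate once $\Psi$ is chosen correctly.

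For the backward direction ($\Leftarrow$), I would assume such a $\Psi$ exists and deduce that the density factors through $\Phi$. The key observation is that whenever $\Phi(o_1)=\Phi(o_2)$, the hypothesis forces $\varphi_{\mathbf{R}|o_1,\mathbf{a}}(\bm{\omega})=\Psi(\bm{\omega};\Phi(o_1),\mathbf{a})=\Psi(\bm{\omega};\Phi(o_2),\mathbf{a})=\varphi_{\mathbf{R}|o_2,\mathbf{a}}(\bm{\omega})$ for every $\bm{\omega}\in\mathbb{R}^T$ and $\mathbf{a}\in\mathcal{A}^T$. By Lemma~\ref{lem:cf}, equal characteristic functions imply equal distributions, so $p(\cdot|o_1,\mathbf{a})=p(\cdot|o_2,\mathbf{a})$. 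Hence $p(\mathbf{r}|o,\mathbf{a})$ depends on $o$ only through the value $\Phi(o)$, and I can define $f(\mathbf{r};z,\mathbf{a}):=p(\mathbf{r}|o,\mathbf{a})$ for any $o$ with $\Phi(o)=z$; this is well-defined precisely because of the equality just derived, and it satisfies the reconstruction identity required by the definition of a $T$-level reward sequence representation.

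The main obstacle is the well-definedness step in the backward direction: the construction of $f$ is legitimate only because matching characteristic functions forces the underlying distributions to coincide. This is exactly where Lemma~\ref{lem:cf} is indispensable---without the injectivity of the map from distributions to characteristic functions, collapsing two observations to the same representation would not imply they share a reward sequence distribution, and $f$ could fail to be a function of $\Phi(o)$ alone. A minor technical point worth noting is that both $\Psi$ and $f$ need only be specified on the image $\Phi(\mathcal{O})$, so no assumption on the surjectivity or invertibility of $\Phi$ is required, and the argument goes through verbatim in the discrete reward case upon replacing the integral by a sum.
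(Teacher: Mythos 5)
Your proof is correct and takes essentially the same route as the paper: both reduce the theorem to the observation that $p(\cdot|o,\mathbf{a})$ factors through $\Phi(o)$ if and only if $\varphi_{\mathbf{R}|o,\mathbf{a}}$ does, using the bijection between distributions and characteristic functions from Lemma~\ref{lem:cf}. The paper writes this as a single chain of equivalences while you spell out the two directions and the well-definedness of $f$ explicitly, but the key idea is identical.
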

	\begin{proof}
		See Appendix~\ref{app-thm:cf}.
	\end{proof}
	Theorem~\ref{thm:cf} provides an equivalent definition of $T$-level reward sequence representation and inspires our novel approach to predict the characteristic functions of RSDs for representation learning.

	\begin{figure*}[h]
		\centering
		\includegraphics[width=4.8cm]{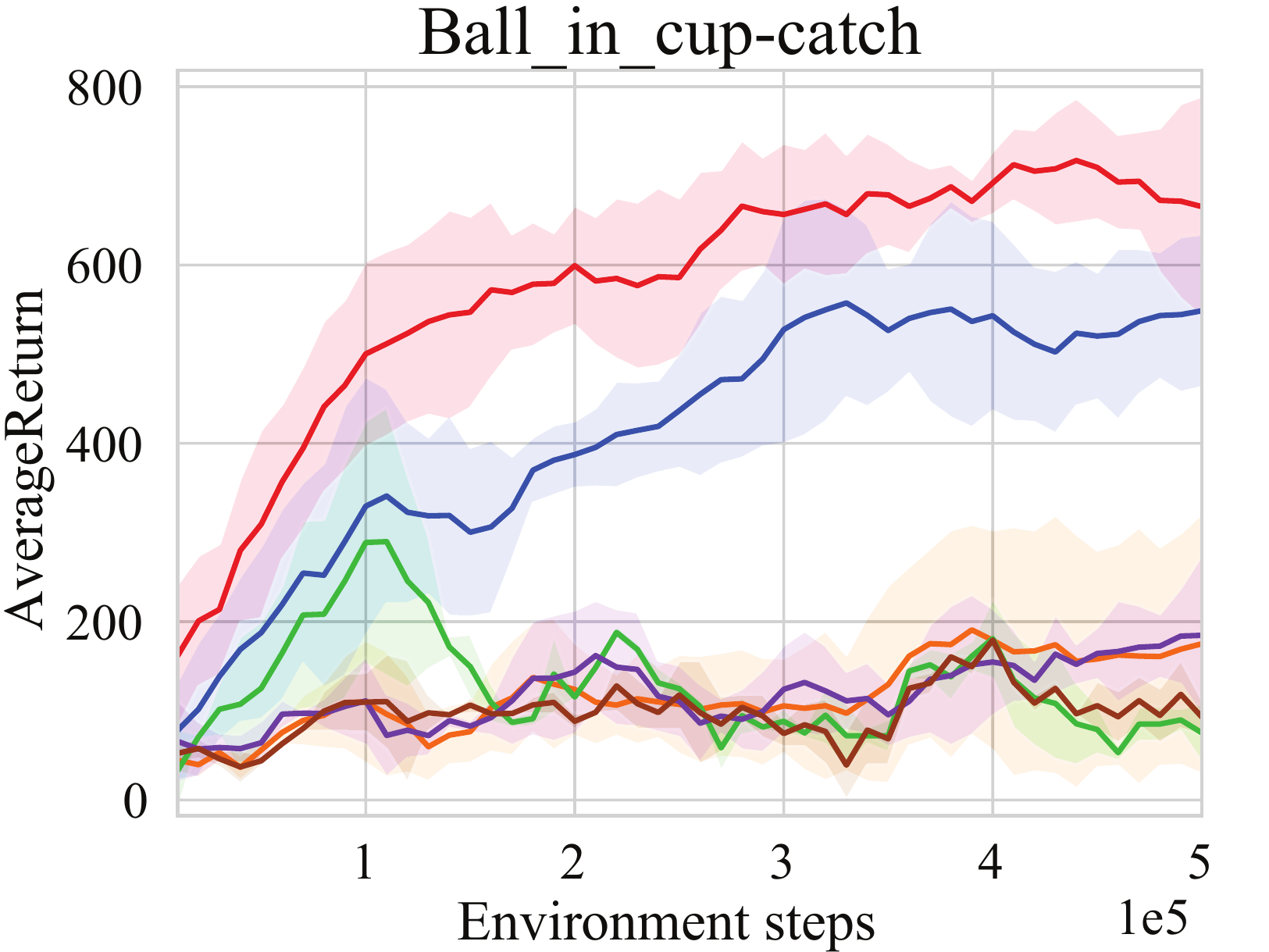}
		\hskip 0.2in
		\includegraphics[width=4.8cm]{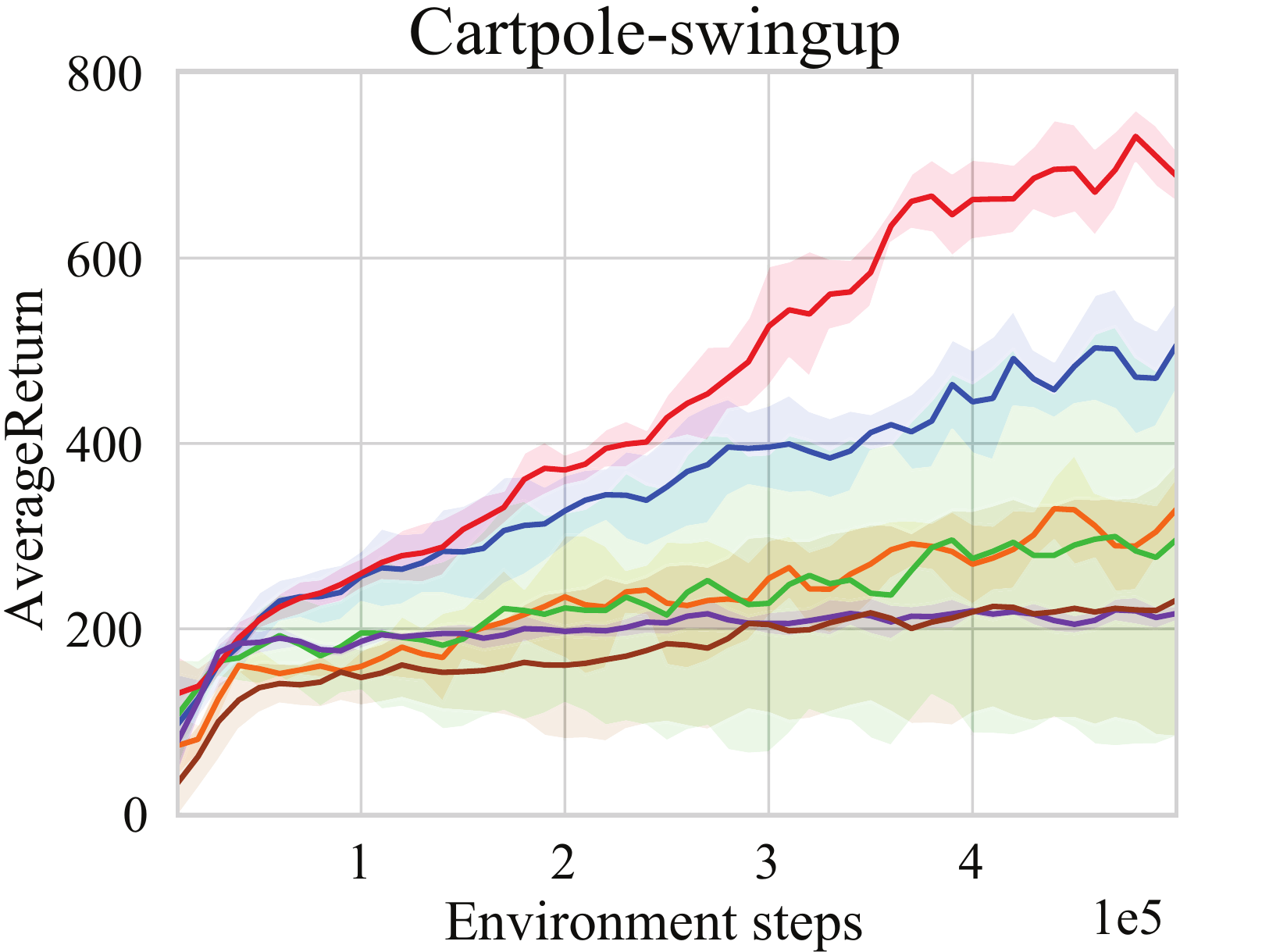}
		\hskip 0.2in
		\includegraphics[width=4.8cm]{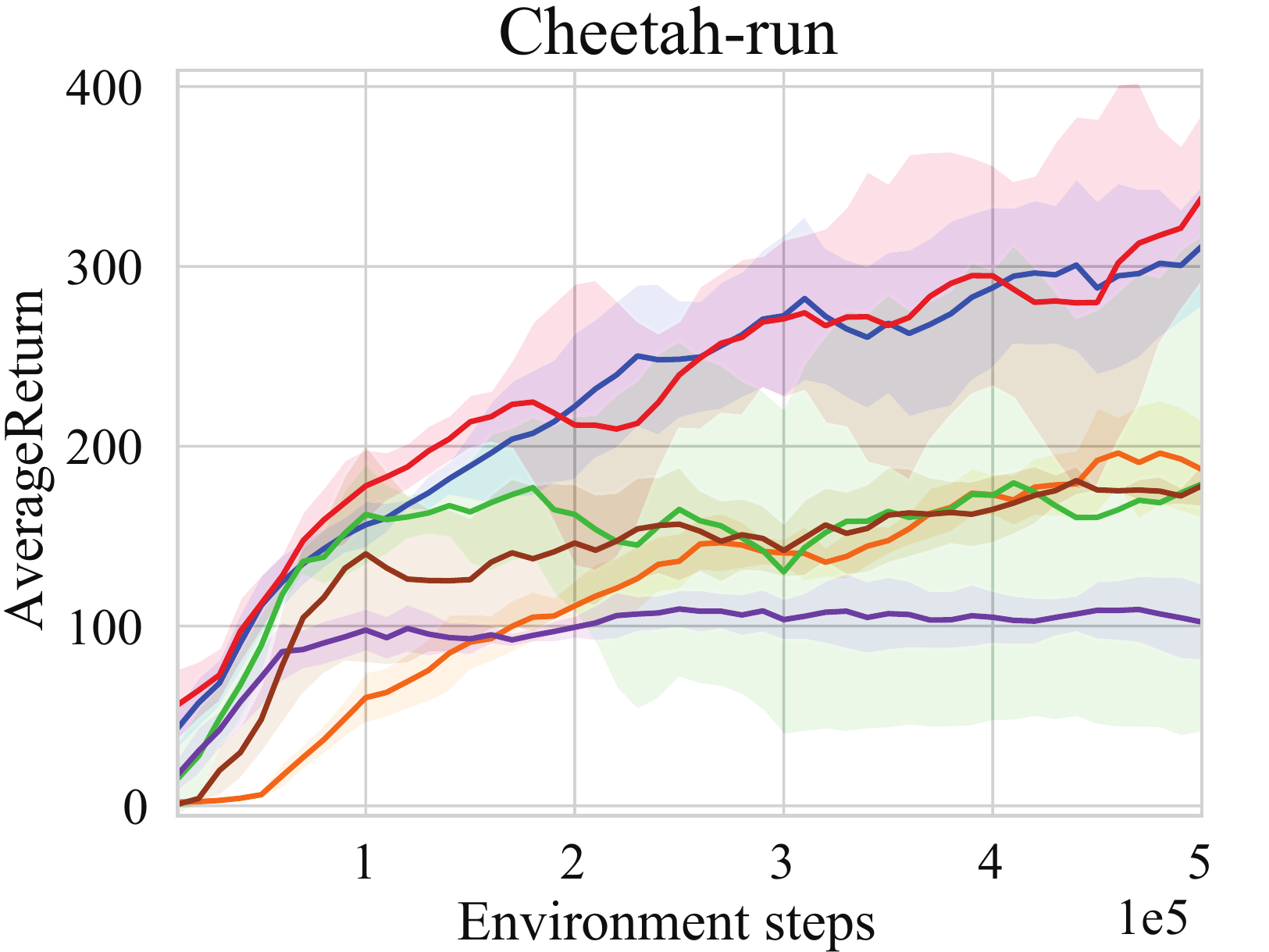}
		
		\vskip 0.02in
		\includegraphics[width=4.8cm]{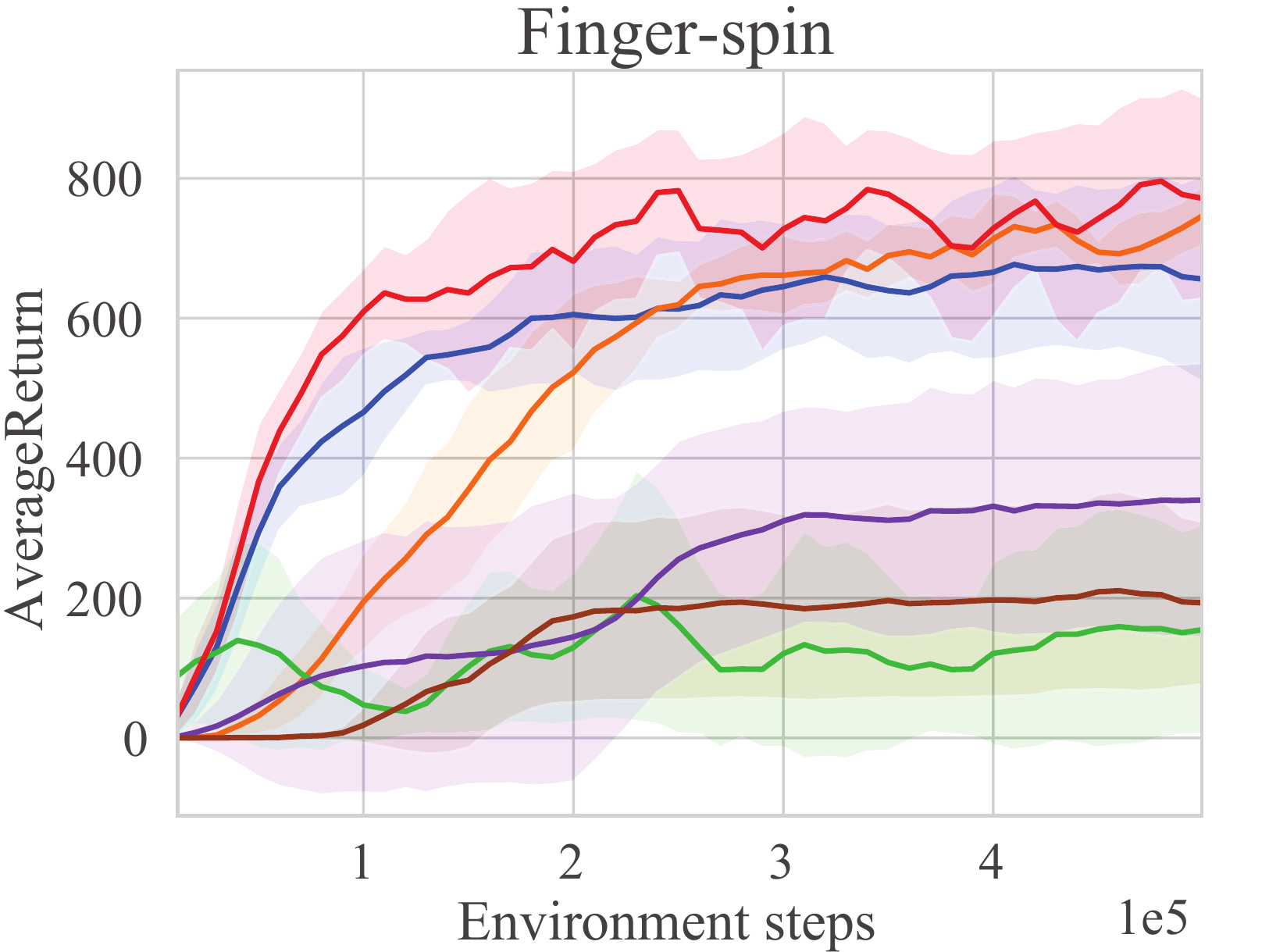}
		\hskip 0.2in
		\includegraphics[width=4.8cm]{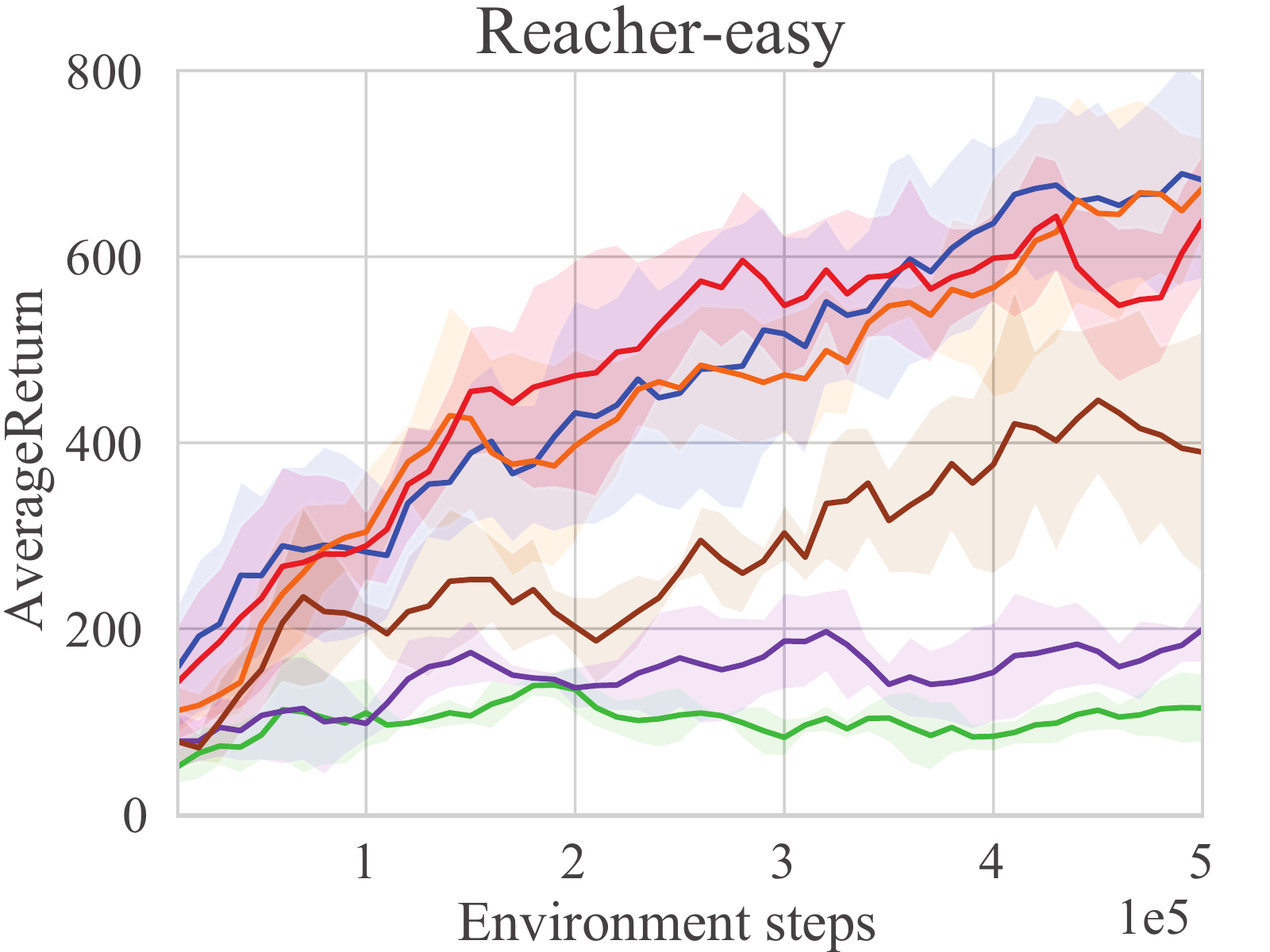}
		\hskip 0.2in
		\includegraphics[width=4.8cm]{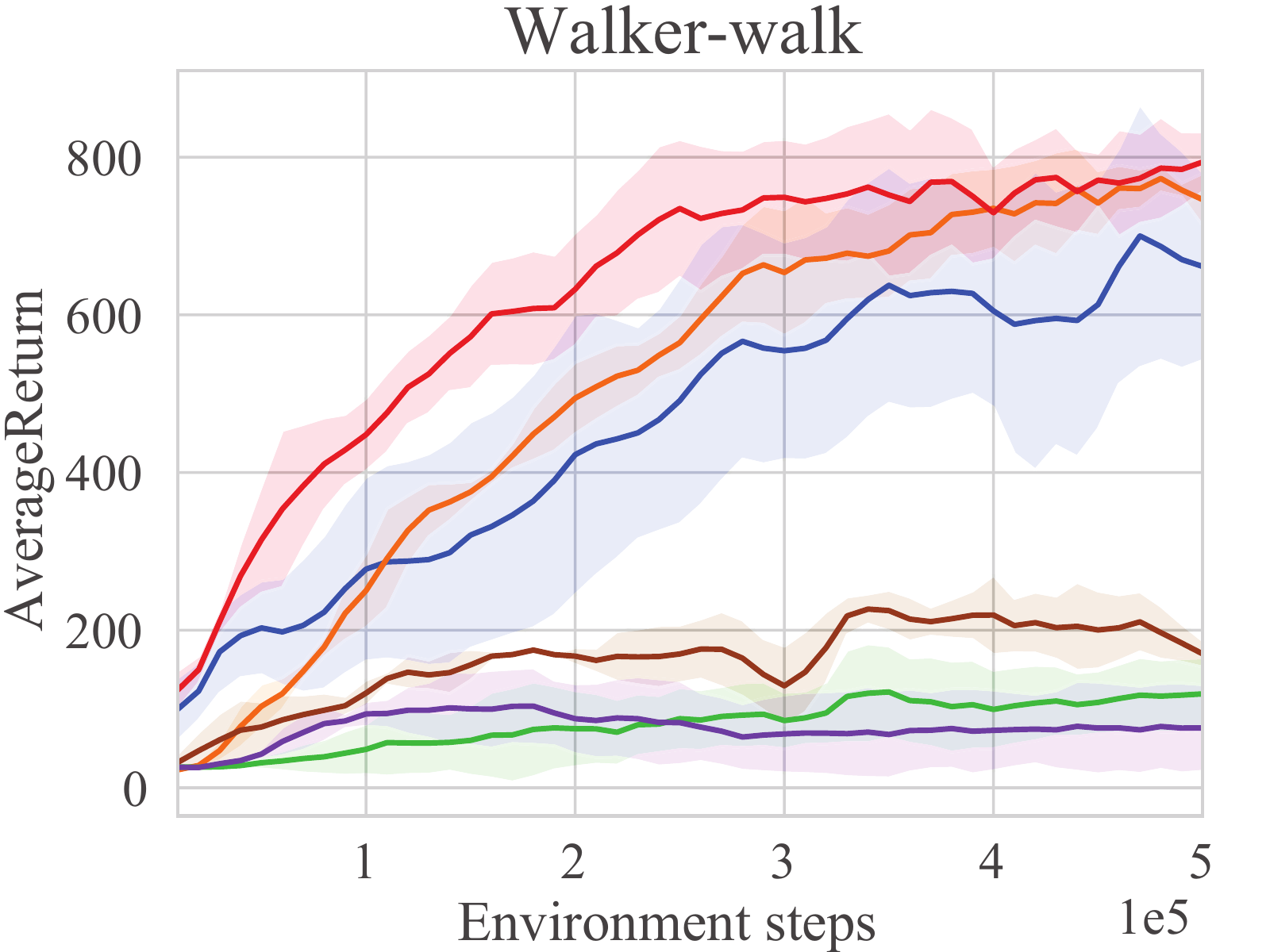}
		
		\vskip 0.02in
		\includegraphics[width=13cm]{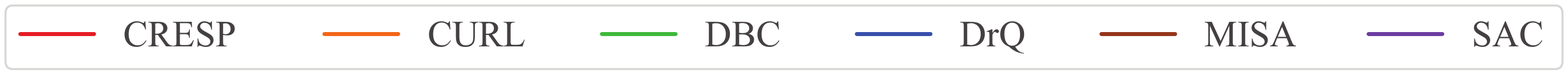}
		\vskip -0.05in
		\caption{Learning curves of six methods on six tasks with dynamic background distractions for 500K environment steps.
			The solid curves denote the means and the shaded regions denote the minimum and maximum returns over 6 trials.
			Each checkpoint is evaluated by 10 episodes on unseen environments.
			Curves are smoothed for visual clarity.}
		\Description{The performances of different algorithms on unseen environments with dynamic background distractions.}
		\label{fig-result-db}
	\end{figure*}

	\begin{algorithm}[tb]
		\caption{Characteristic Reward Sequence Prediction}
		\label{alg-cresp}
		\begin{algorithmic}
			\STATE {Initialize a replay buffer $\mathcal{D}$, a policy $\pi$, a representation $\Phi$, and a function approximator $\Psi$}
			\FOR {each iteration}
			\FOR {$e$ in $\mathcal{E}$}
			\FOR {each environment step $t$}
			\STATE Execute action $a_t\sim \pi(\cdot|\Phi(o_t))$
			\STATE Receive a transition $o_{t+1}, r_{t+1}\sim p^e(\cdot|o_t,a_t)$
			\STATE Record partial trajectories $\{(o_{t-i}, a_{t-i}, r_{t+1-i})\}_{i=0}^{T-1}$ in $\mathcal{D}$
			\ENDFOR
			\ENDFOR
			\FOR {each gradient step}
			\STATE Sample partial trajectories from $\mathcal{D}$
			\STATE Update the representation: \textcolor{blue}{$\mathcal{L^N_D}(\Phi, \Psi)$}
			\STATE Update the policy: $\mathcal{L}_{\text{RL}}(\pi)$
			\ENDFOR
			\ENDFOR
			\vskip -0.1in
		\end{algorithmic}
	\end{algorithm}

	\subsection{Characteristic Reward Sequence Prediction}
	\label{subsec-4.3}
	To improve the generalization of a learned policy on unseen environments with visual distractions, we propose  \textbf{C}haracteristic \textbf{Re}ward \textbf{S}equence \textbf{P}rediction (CRESP), a novel approach to learn representations for task relevance from high-dimensional observations.
	As discussed above, CRESP learns RSDs by predicting the characteristic functions $\varphi_{\mathbf{R}|o,\mathbf{a}}(\bm{\omega})$. In this section, we focus on the detailed learning procedure for the prediction.
	
	For an observation $o\in\mathcal{O}$ and an action sequence $\mathbf{a}\in\mathcal{A}^T$, the true characteristic function of the corresponding reward sequence is $\varphi_{\mathbf{R}|o,\mathbf{a}}(\bm{\omega})=\mathbb{E}_{\mathbf{R}\sim p(\cdot|o,\mathbf{a})}[e^{i\langle\bm{\omega},\mathbf{R}\rangle}]$.
	We estimate the characteristic function by a predictor $\Psi (\bm{\omega};\Phi(o),\mathbf{a})$.
	We use the weighted squared distance between the true and predicted characteristic functions as the prediction loss:
	\begin{align*}
		L^{\mathcal{W}}(\Phi, \Psi|o,\mathbf{a})\nonumber & = \mathbb{E}_{\bm{\Omega}\sim\mathcal{W}}\left[ \left\| \Psi\left(\bm{\Omega};\Phi(o), \mathbf{a}\right) - \varphi_{\mathbf{R}|o,\mathbf{a}}(\bm{\Omega}) \right\|^2_2 \right]\nonumber\\
		& = \int_{\mathbb{R}^T}\left|\Psi\left(\bm{\omega};\Phi(o), \mathbf{a}\right) - \varphi_{\mathbf{R}|o,\mathbf{a}}(\bm{\omega})\right|^2\mathcal{W}(\bm{\omega})\mathrm{d}\bm{\omega},
	\end{align*}
	where $\mathcal{W}$ is any probability density function on $\mathbb{R}^T$.
	We optimize the expected loss for observations and action sequences taken from the replay buffer $\mathcal{D}$:
	\begin{align*}
		L^{\mathcal{W}}_{\mathcal{D}}(\Phi,\Psi) & = \mathbb{E}_{(O,\mathbf{A})\sim\mathcal{D}}\left[L^{\mathcal{W}}(\Phi, \Psi|O,\mathbf{A})\right]\\
		& = \mathbb{E}_{(O,\mathbf{A})\sim\mathcal{D},\bm{\Omega}\sim\mathcal{W}}\left[ \left\| \Psi\left(\bm{\Omega};\Phi(O), \mathbf{A}\right) - \varphi_{\mathbf{R}|O,\mathbf{A}}(\bm{\Omega}) \right\|^2_2 \right].
	\end{align*}

	In practice, Since we have no access to the true characteristic functions, we propose to optimize an upper bound on $L^{\mathcal{W}}_{\mathcal{D}}$:
	\begin{align*}
		& \quad \mathcal{L}^{\mathcal{W}}_{\mathcal{D}}(\Phi, \Psi)\\
		& = \mathbb{E}_{(O, \mathbf{A}, \mathbf{R}) \sim \mathcal{D},\bm{\Omega}\sim\mathcal{W}}\left[ \left\| \Psi\left(\bm{\Omega}; \Phi(O), \mathbf{A}\right) - e^{i\langle\bm{\Omega},\mathbf{R}\rangle} \right\|^2_2 \right] \nonumber\\
		& \geq \mathbb{E}_{(O,\mathbf{A})\sim\mathcal{D},\bm{\Omega}\sim\mathcal{W}}\left[ \left\| \Psi\left(\bm{\Omega}; \Phi(O), \mathbf{A}\right) - \mathbb{E}_{\mathbf{R}\sim p(\cdot|O,\mathbf{A})}\left[e^{i\langle\bm{\Omega},\mathbf{R}\rangle}\right] \right\|^2_2 \right]\\
		& = L^{\mathcal{W}}_{\mathcal{D}}(\Phi,\Psi).
	\end{align*}
	Due to the complex form of characteristic functions, we divide the predictor $\Psi$ into two parts $\Psi=(\psi_{\cos}, \psi_{\sin})$, where $\psi_{\cos}$ estimates the real parts, and $\psi_{\sin}$ estimates the imaginary parts of characteristic functions, respectively.
	Moreover, we draw $\Omega$ from a Gaussian distribution $\mathcal{W}=\mathcal{N}(\bm{\mu},\bm{\sigma}^2)$ in practice. We then parameterize this distribution $\mathcal{N}(\bm{\mu},\bm{\sigma}^2)$ and perform ablation on it in Appendix~\ref{app-pred-loss}.
	Based on the  experimental results, we leverage the standard Gaussian distribution $\mathcal{N}$. Then the loss function is:
	\begin{align*}
		\mathcal{L}^\mathcal{N}_{\mathcal{D}}(\Phi,\Psi)
		=\mathbb{E}_{(O, \mathbf{A}, \mathbf{R}) \sim \mathcal{D},\bm{\Omega}\sim\mathcal{\mathcal{N}}} &\left[\left\| \psi_{\cos} \left(\bm{\Omega}; \Phi(O), \mathbf{A}\right) - \cos\left(\langle\bm{\Omega},\mathbf{R}\rangle\right) \right\|^2_2 \right. \nonumber\\
		+ &\left. \left\| \psi_{\sin} \left(\bm{\Omega}; \Phi(O), \mathbf{A}\right) - \sin\left(\langle\bm{\Omega},\mathbf{R}\rangle\right) \right\|^2_2 \right].
	\end{align*}
	
	In the training process, we update the encoder $\Phi$ and the predictor $\Psi$ due to the auxiliary loss $\mathcal{L}^{\mathcal{N}}_{\mathcal{D}}(\Phi,\Psi)$, and use the trained encoder $\Phi$ for the RL tasks.
	The whole architecture of CRESP and training procedure are illustrated in Figure~\ref{fig-arc} and Algorithm~\ref{alg-cresp}. 

	

	\section{Experiments}
	\label{sec-5}
	In this paper, we improve the performance of generalization on unseen environments with visual distractions.
	We focus on training agents in multi-environments under traditional off-policy settings without any prior environmental knowledge, such as strong augmentations designed for visual factors~\citep{lee2020network,iclr/ZhangCDL18,icml/FanWHY0ZA21}, fine-tuning in test environments~\citep{iclr/HansenJSAAEPW21}, or environmental labels for invariance~\citep{iclr/AgarwalMCB21,l4dc/SonarPM21}.
	We then investigate the performances of agents trained by different algorithms on various unseen test environments.
	
	For each environment, we benchmark CRESP extensively against prior state-of-the-art methods:
	1) \textbf{CURL}~\citep{icml/LaskinSA20}: a RL method with an auxiliary contrastive task;
	2) \textbf{DrQ}~\citep{iclr/YaratsKF21}: an effective method with state-of-the-art performance on DeepMind Control (DMControl)~\citep{corr/abs-1801-00690};
	3) \textbf{MISA}~\citep{icml/0001LSFKPGP20}: a recent approach from causal inference to learn invariant representations by approximating one-step rewards and dynamics;
	4) \textbf{DBC}~\citep{iclr/0001MCGL21}: a research for generalization in RL to learn representations via the bisimulation metric;
	5) \textbf{SAC}~\citep{icml/HaarnojaZAL18}: a traditional off-policy deep RL algorithm.
	
	\begin{table*}
		\caption{DMControl results with dynamic distractions at 500K steps. In dynamic background settings, all methods are evaluated on 30 unseen dynamic backgrounds. In dynamic color settings, $\beta_{\text{test}}=0.5$. Highest mean scores are boldfaced. CRESP outperforms prior SOTA methods in 11 out of 12 settings with +31.0\% boost on average.}
		\label{table-result-db}
		\centering
		\begin{tabular}{r|c|cccccc}
			\toprule
			& Method & Bic-catch & C-swingup & C-run & F-spin & R-easy & W-walk \\
			\midrule
			\multirow{6}{*}{\rotatebox{90}{Backgrounds}} & \textbf{CRESP} & $\textcolor{blue}{\mathbf{665} \pm \mathbf{185}}\,\,(\textcolor{blue}{+17\%})$ & $\textcolor{blue}{\mathbf{689} \pm \mathbf{49}}\,\,(\textcolor{blue}{+36\%})$ & $\textcolor{blue}{\mathbf{327} \pm \mathbf{54}}\,\,(\textcolor{blue}{+11\%})$ & $\textcolor{blue}{\mathbf{778} \pm \mathbf{154}}\,\,(\textcolor{blue}{+4\%})$ & $667 \pm 82$ & $\textcolor{blue}{\mathbf{794} \pm \mathbf{83}}\,\,(\textcolor{blue}{+6\%})$ \\
			& DrQ  & $570 \pm 126$ & $506 \pm 54$ & $295 \pm 24$ & $654 \pm 157$ & $683 \pm 177$ & $661 \pm 126$ \\
			& CURL & $167 \pm 142$ & $329 \pm 45$ & $185 \pm 39$ & $745 \pm 78$ & $\textcolor{blue}{\textbf{714} \pm \textbf{81}}$ & $746 \pm 41$ \\
			& DBC  & $113 \pm 133$ & $296 \pm 213$ & $133 \pm 98$ & $154 \pm 149$ & $129 \pm 64$ & $119 \pm 46$ \\
			& MISA & $123 \pm 44$ & $240 \pm 156$ & $178 \pm 21$ & $607 \pm 44$ & $360 \pm 91$ & $170 \pm 21$ \\
			& SAC  & $199 \pm 124$ & $209 \pm 14$ & $102 \pm 24$ & $188 \pm 114$ & $217 \pm 84$ & $96 \pm 62$ \\
			\midrule
			\multirow{6}{*}{\rotatebox{90}{Colors}} & \textbf{CRESP} & $\textcolor{blue}{\textbf{711} \pm \textbf{75}\,\,(+12\%)}$ & $\textcolor{blue}{\textbf{629} \pm \textbf{76}\,\,(+108\%)}$ & $\textcolor{blue}{\textbf{413} \pm \textbf{51}\,\,(+78\%)}$ & $\textcolor{blue}{\textbf{801} \pm \textbf{98}\,\,(+22\%)}$ & $\textcolor{blue}{\textbf{339} \pm \textbf{64}\,\,(+47\%)}$ & $\textcolor{blue}{\textbf{317} \pm \textbf{271}\,\,(+30\%)}$ \\
			& DrQ  & $634 \pm 85$ & $302 \pm 58$ & $216 \pm 144$ & $465 \pm 279$ & $160 \pm 52$ & $193 \pm 178$ \\
			& CURL & $470 \pm 128$ & $299 \pm 43$ & $232 \pm 35$ & $655 \pm 105$ & $231 \pm 82$ & $243 \pm 203$ \\
			& DBC  & $243 \pm 106$ & $129 \pm 18$ & $147 \pm 47$ & $25 \pm 34$ & $168 \pm 4$ & $140 \pm 57$ \\
			& MISA & $370 \pm 88$ & $169 \pm 1$ & $41 \pm 4$ & $2 \pm 1$ & $72 \pm 69$ & $94 \pm 3$ \\
			& SAC  & $244 \pm 136$ & $188 \pm 6$ & $87 \pm 45$ & $2 \pm 1$ & $109 \pm 46$ & $37 \pm 7$ \\
			\bottomrule
		\end{tabular}
	\end{table*}

	\paragraph{Network Details}
	Our method builds upon SAC and follows the network architecture of DrQ and CURL. 
	We use a 4-layer feed-forward ConvNet with no residual connection as the encoder. Then, we apply 3 fully connected layers with hidden size 1024 for actor and critic respectively.
	We also use the random cropping for image pre-processing proposed by DrQ as a weak augmentation without prior knowledge of test environments. To predict characteristic functions, we use 3 additional layers and ReLU after the pixel encoder. We present a detailed account of the architecture in Appendix~\ref{app-nd}.
	
	\paragraph{Experiment Parameters}
	In Section~\ref{subsec-exp-b} and Section~\ref{subsec-exp-c}, all experi-ments report the means and standard deviations of cumulative rewards over 6 trials per task for 500K environment steps.
	The experiments in Section~\ref{subsec-5.3} are on 3 random seeds.
	We apply the batch size 256 in CRESP and use the default values in the other methods respectively. 
	The action repeat of each task is adopted from Planet~\citep{icml/HafnerLFVHLD19}, which is the common setting in visual RL.
	Moreover, we choose the sequence length $T=5$ for better performance.
	For the computation of $\mathbb{E}_{\Omega\sim\mathcal{N}}\left[\cdot\right]$ in CRESP, we perform the ablation study of the sample number $\kappa$ of $\Omega$ in Appendix~\ref{app-pred-loss}, and we choose $\kappa=256$ for better performance.
	
	We evaluate the results of performance for 100 episodes in each table and boldface the results with the highest mean. In each figure, we plot the average cumulative rewards and the shaded region illustrates the standard deviation. Each checkpoint is evaluated using 10 episodes on unseen environments. More details of hyperparameters are in Appendix~\ref{app-pred-loss}.

	\subsection{Evaluation with Dynamic Backgrounds}
	\label{subsec-exp-b}
	To illustrate the effectiveness of our approach for generalization, we follow the benchmark settings in Distracting Control Suite (DCS)~\citep{corr/abs-2101-02722} with dynamic background distractions (Figure~\ref{fig-rl}) based on 6 DMControl tasks.
	We use 2 dynamic backgrounds during training and use the distracting background instead of original background ($\beta_{\text{bg}}=1.0$). We then evaluate generalization on 30 unseen dynamic backgrounds.
	We leverage the data from a replay buffer to approximate the reward sequence distributions in practice.
	Although we make the above approximation, CRESP performs well on the majority of tasks and advances MISA, the prior method similar with us by learning the properties of BMDP.
	On average of all tasks, CRESP achieves performance improvement by $12.3\%$ margins  (Table~\ref{table-result-db}) .
	Figure~\ref{fig-result-db} shows the learning curves of six methods.
	
	To visualize the representations learned by CRESP, we apply the t-distributed stochastic neighbor embedding (t-SNE) algorithm, a nonlinear dimensionality reduction technique to keep the similar high-dimensional vectors close in lower-dimensional space.
	Figure~\ref{fig-tsne} illustrates that in cheetah-run task, the representations learned by CRESP from different observations with similar states are the neighboring points in the two-dimensional map space.

	\begin{figure}
		\vskip -0.05in
		\includegraphics[width=8.5cm]{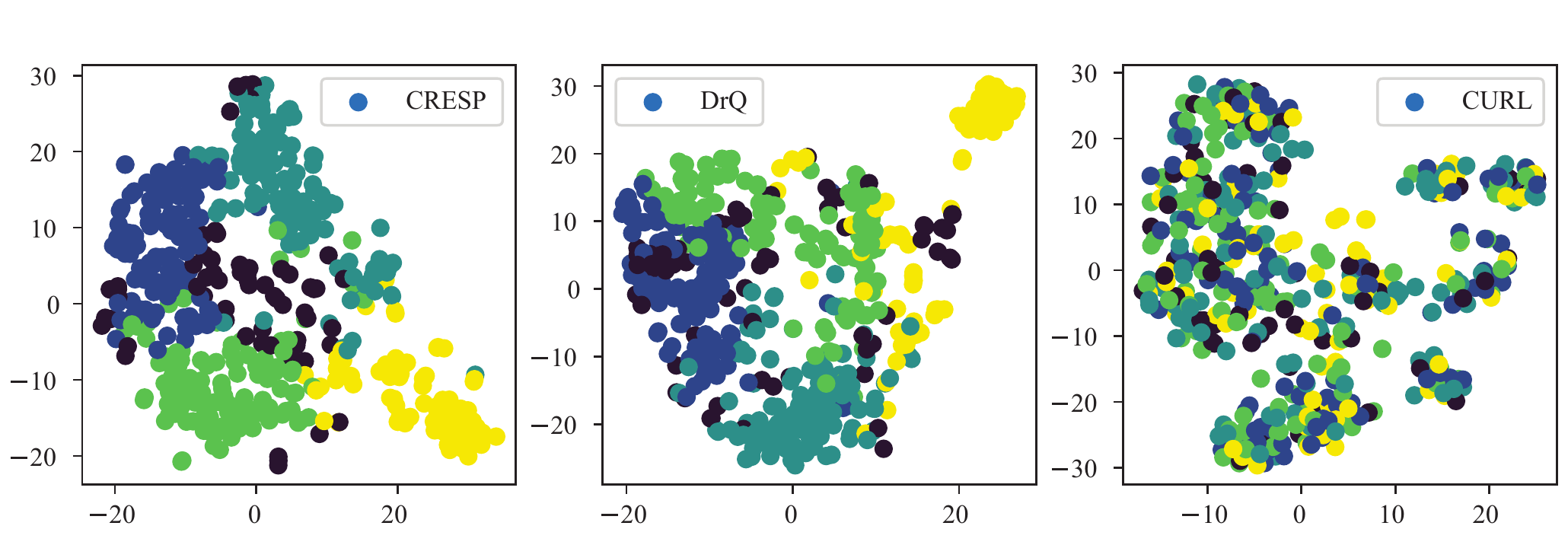}
		\vskip -0.1in
		\caption{t-SNE visualization of representations learned by CRESP (left), DrQ (center), and CURL (right). CRESP correctly groups semantically similar observations with different backgrounds, which are in the same colors.}
		\Description{t-SNE of latent spaces learned by CRESP and Drq.}
		\label{fig-tsne}
		\vskip -0.1in
	\end{figure}
	
	\subsection{Evaluation with Dynamic Color Distractions}
	\label{subsec-exp-c}
	To further demonstrate that CRESP can learn robust representations without task-irrelevant details in observations for generalization, we exhibit the performance on DCS with color distractions~\citep{corr/abs-2101-02722}.
	In these tasks, the color of the agent, which is the objective under control, changes during the episode. 
	The change of colors is modeled as a Gaussian distribution, whose mean is the color at the previous time and whose standard deviation is set as a hyperparameter $\beta$.
	We use 2 training environments with $\beta_1=0.1$ and $\beta_2=0.2$, and evaluate the agents in the test environment with $\beta_{\text{test}}=0.5$.
	
	We list the results in Table~\ref{table-result-db}. These results demonstrate that CRESP also improves the performance of generalization on unseen environment with color distractions and gains an average of $+49.7\%$ more return over the best prior method.

	\begin{figure}
		\centering
		\includegraphics[width=4.6cm]{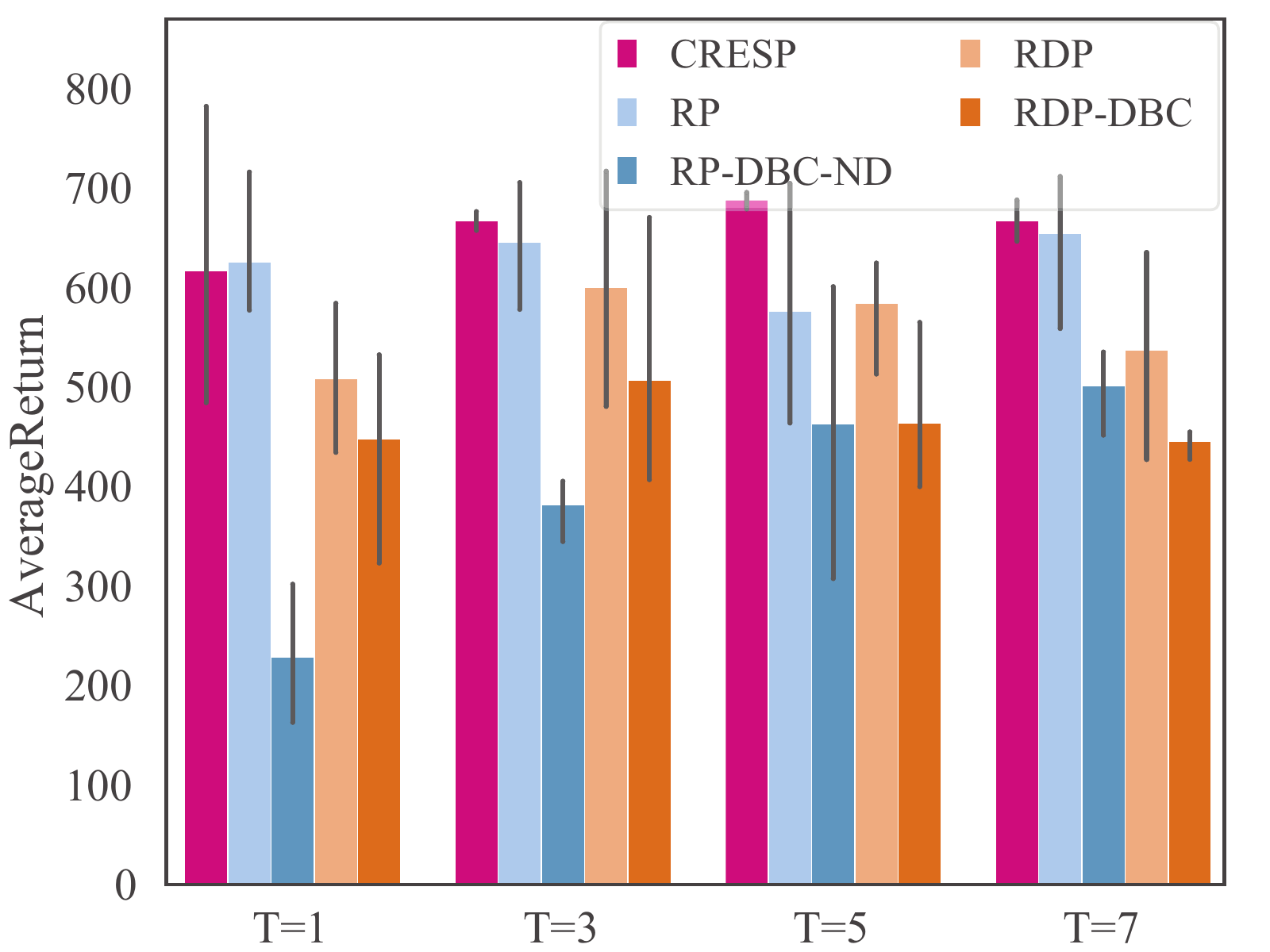}
		\vskip -0.1in
		\caption{Benchmarks in dynamic background settings on cartpole-swingup with 3 random seeds at 500K steps. Bars show means and standard errors.}
		\Description{The generalization performances of different methods leveraging rewards on unseen background dynamics.}
		\label{fig-rp-rdp-bism}
		\vskip -0.1in
	\end{figure}
	
	\subsection{Effectiveness of Reward Signals}
	\label{subsec-5.3}
	To understand the generalization performance of representations learned by using different properties of MDPs, such as reward signals and transition dynamics, we provide additional experiments on DCS with dynamic backgrounds to evaluate different representation learning methods:
	1) \textbf{CRESP}: predicting the characteristic functions of RSDs.
	2) \textbf{RP}: predicting the expectations of reward sequences. We apply a 3-layer MLP with $T$-dimensional outputs after the pixel encoder to estimate the $T$-dimensional expectations of reward sequences via $\ell_1$ distance.
	3) \textbf{RDP}: predicting the transition dynamics in addition to expectations of reward sequences.
	We apply a contrastive loss~\citep{corr/abs-1807-03748} to estimate the transition dynamics by maximizing the mutual information of representations from $o_t$ and $o_{t+1}$.
	4) \textbf{RDP-DBC}: based on RDP, also calculating bisimulation metrics~\citep{iclr/0001MCGL21}. We calculate bisimulation metrics in RDP and optimize the distances between representations to approximate their bisimulation metrics.
	5) \textbf{RP-DBC-ND}: based on RP, calculating bisimulation metrics, but without predicting transition dynamics.
	
	The discussion on performance of these methods on unseen envi-ronments is in Section~\ref{subsec-5.3.1}.
	To demonstrate the improvement of representations learned by CRESP, we visualize the task-irrelevant and task-relevant information in Section~\ref{subsec-5.3.3}.
	We present other results in Appendix~\ref{app-ar}.
	
	\subsubsection{Prediction of Rewards and Dynamics}
	\label{subsec-5.3.1}
	To compare the effect of reward signals and transition dynamics to generalization, we conduct experiments in $\textit{cheetah-run}$ and $\textit{cartpole-swingup}$ tasks with dynamic backgrounds to evaluate the agents learned by different methods: 1) RP; 2) RDP; 3) RP-DBC-ND; 4) RDP-DBC. 
	We also \mbox{ablate} the length $T$ of reward sequences from $1$ to $7$.
	Notice that RDP with $T=1$ is similar to MISA and RDP-DBC with $T=1$ is similar to DBC.
	In our experiments, all methods adopt the common architecture and hyperparamters other than the length $T$.
	According to the experimental results, the performance is best when $T=5$.
	
	Figure~\ref{fig-rp-rdp-bism} illustrates the performances on $\textit{cartpole-swingup}$, where CRESP outperforms others for long reward lengths (except for length $1$). 
	These results show that the auxiliary task for estimating RSDs in CRESP performs better than the task for estimating the expectations of reward sequences in RP.
	Moreover, the additional task for learning the transition dynamics may hinder the performance of generalization, because RP outperforms RDP for the same reward length except for length $5$.
	The performances of the methods using bisimulation metrics are lower than others, indicating that bisimulation metrics may be ineffective in RP and RDP.
	
	
	\subsubsection{The Empirical Evaluations of task relevance}
	\label{subsec-5.3.3}
	To demonstrate the generalization of representations, we design two experiments to quantify the task irrelevance and relevance of the the representations learned by: 1) DrQ; 2) CRESP; 3) RP; 4) RDP; 5) RDP-DBC.
	
	\paragraph{Problem Setup}
	We collect an offline dataset with 100K transitions (data of 800 episodes) drawn from 20 unseen environments.
	Then, we sample 80K transitions for training and the rest for evaluation.
	In each experiment, we apply a 3-layer MLP using ReLU activations up until the last layer. The inputs of the MLP are the learned representations at $500$K environment steps.
	We train the MLP by Adam with the learning rate 0.001, and we evaluate the results every 10 epochs.
	In Figure~\ref{fig-tr-tir}, we report the final values of cross-entropy of each algorithm with 3 random seeds.
	
	\begin{figure}
		\centering
		\vskip -0.05in
		\subfigure[Task irrelevance on C-swingup]{\includegraphics[width=3.4cm]{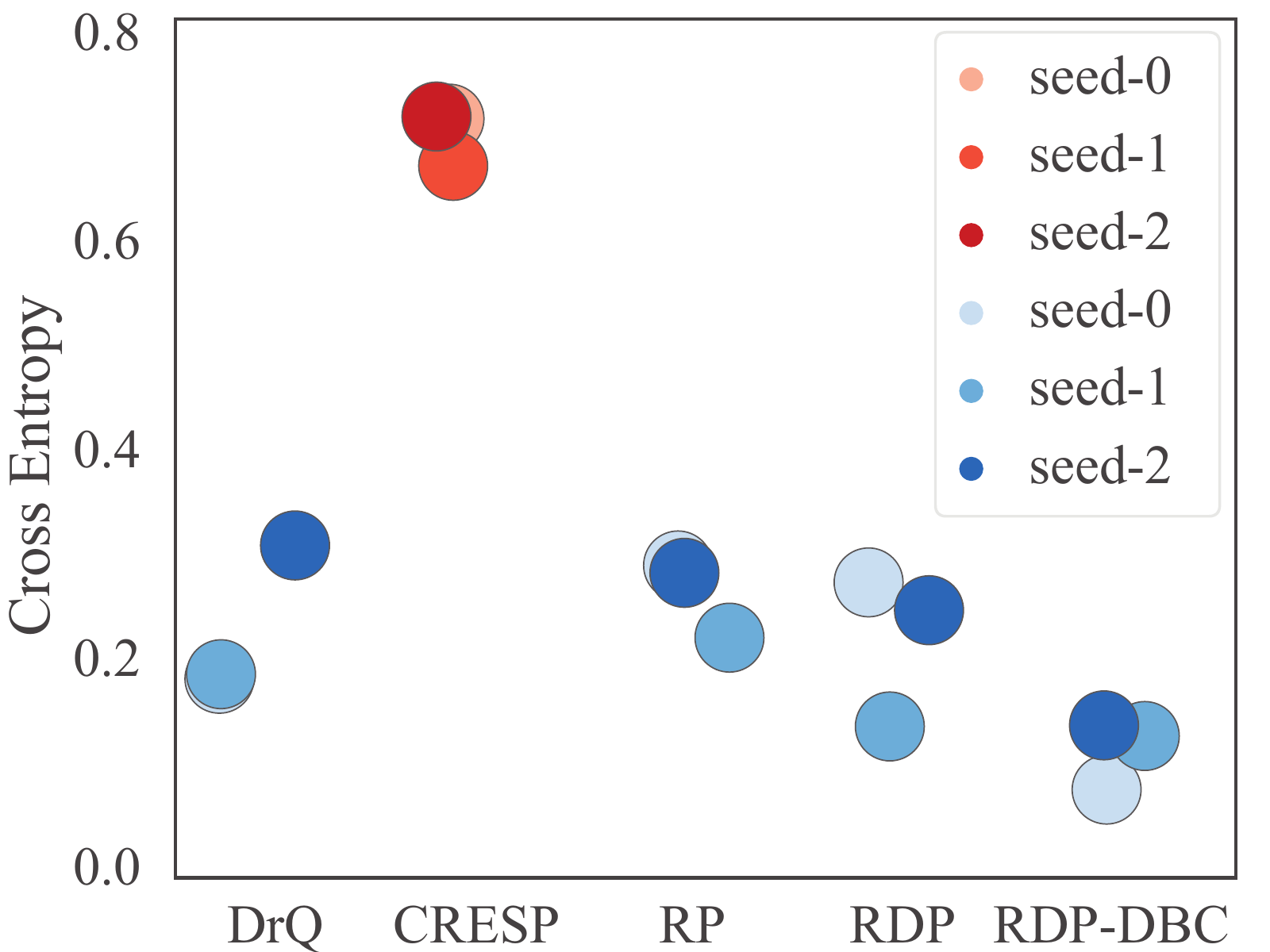}}
		\hskip 0.2in
		\subfigure[Task relevance on C-swingup]{\includegraphics[width=3.4cm]{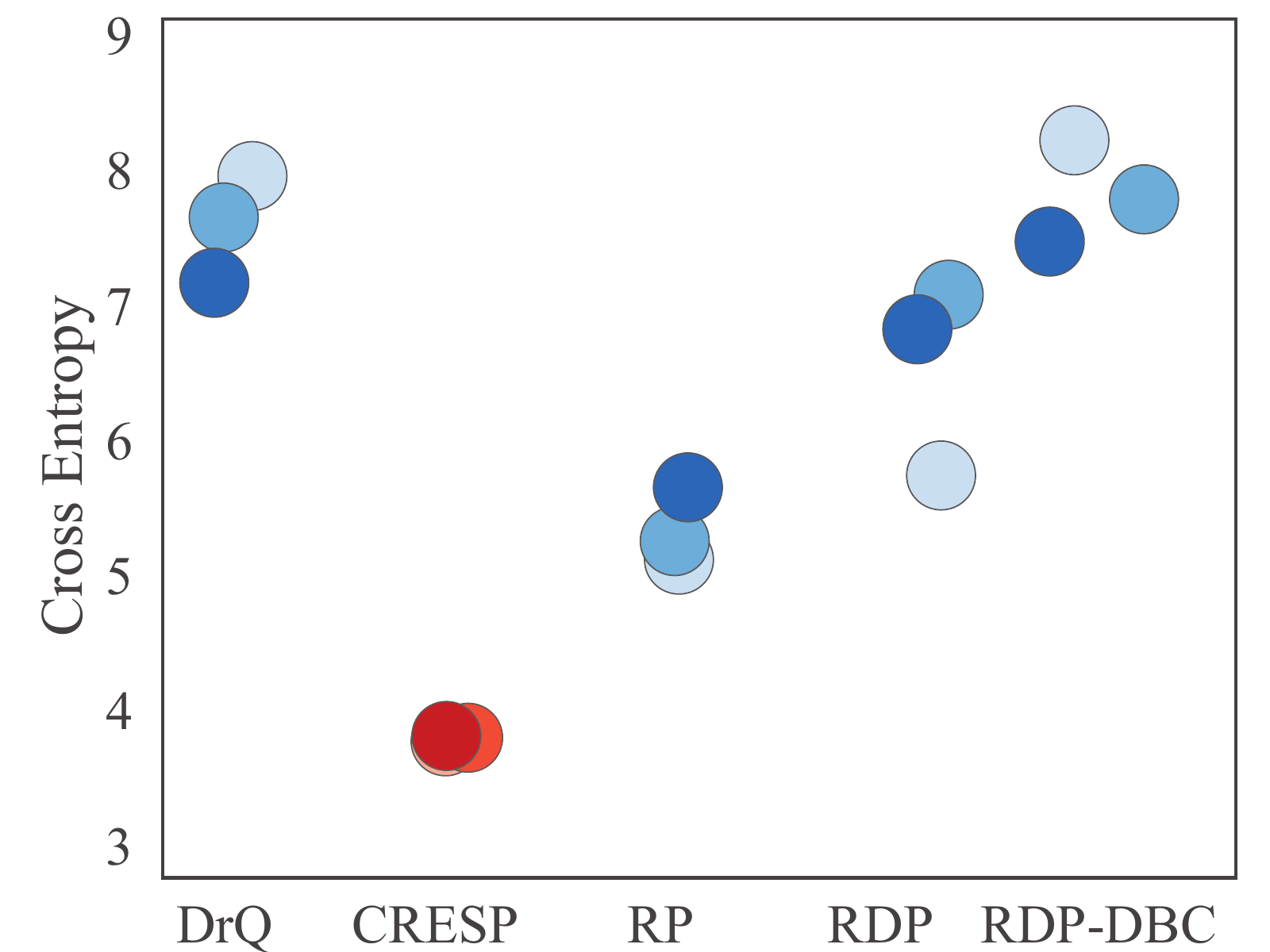}}
		
		\vskip -0.08in
		\subfigure[Task irrelevance on C-run]{\includegraphics[width=3.4cm]{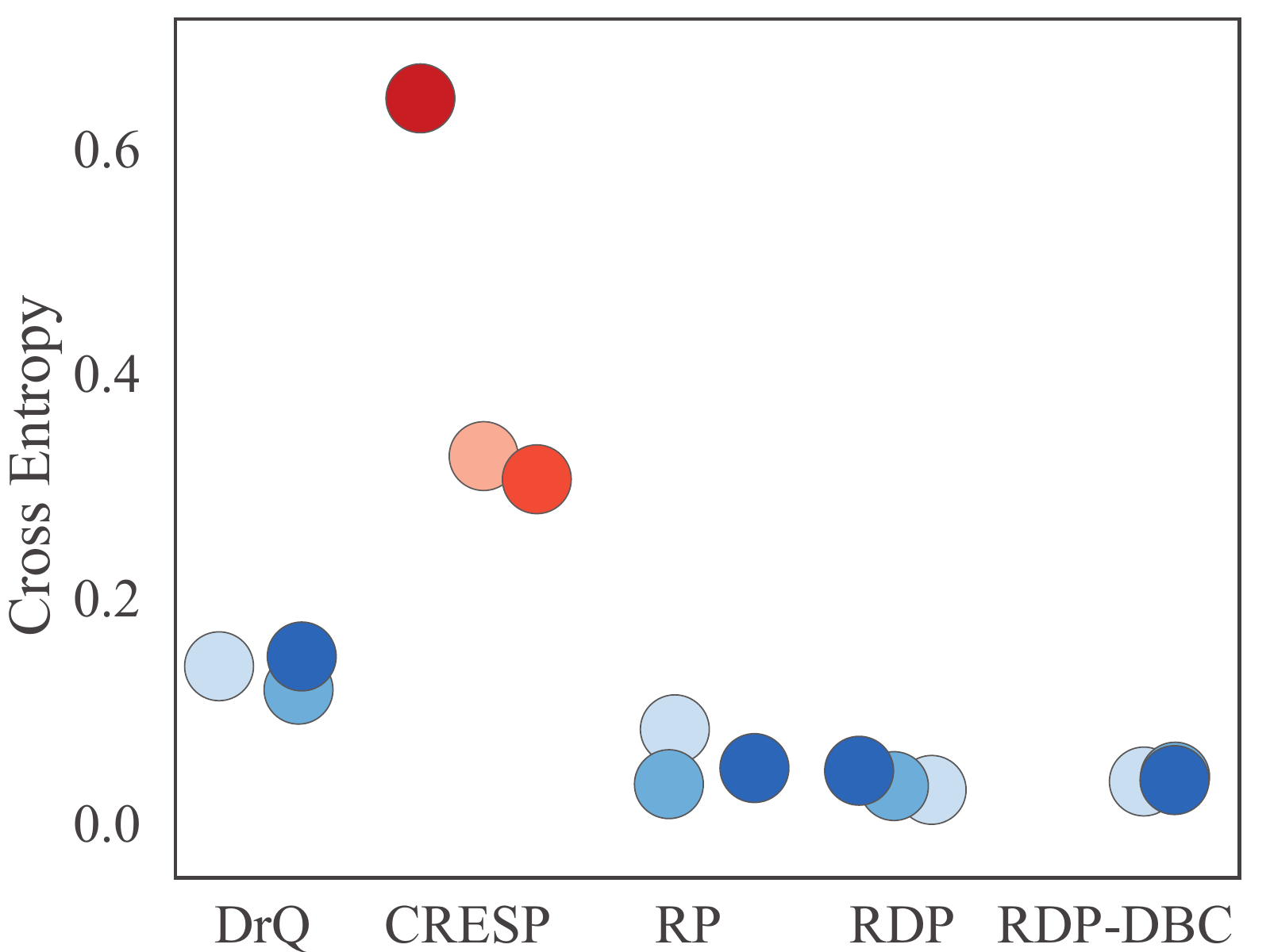}}
		\hskip 0.2in
		\subfigure[Task relevance on C-run]{\includegraphics[width=3.4cm]{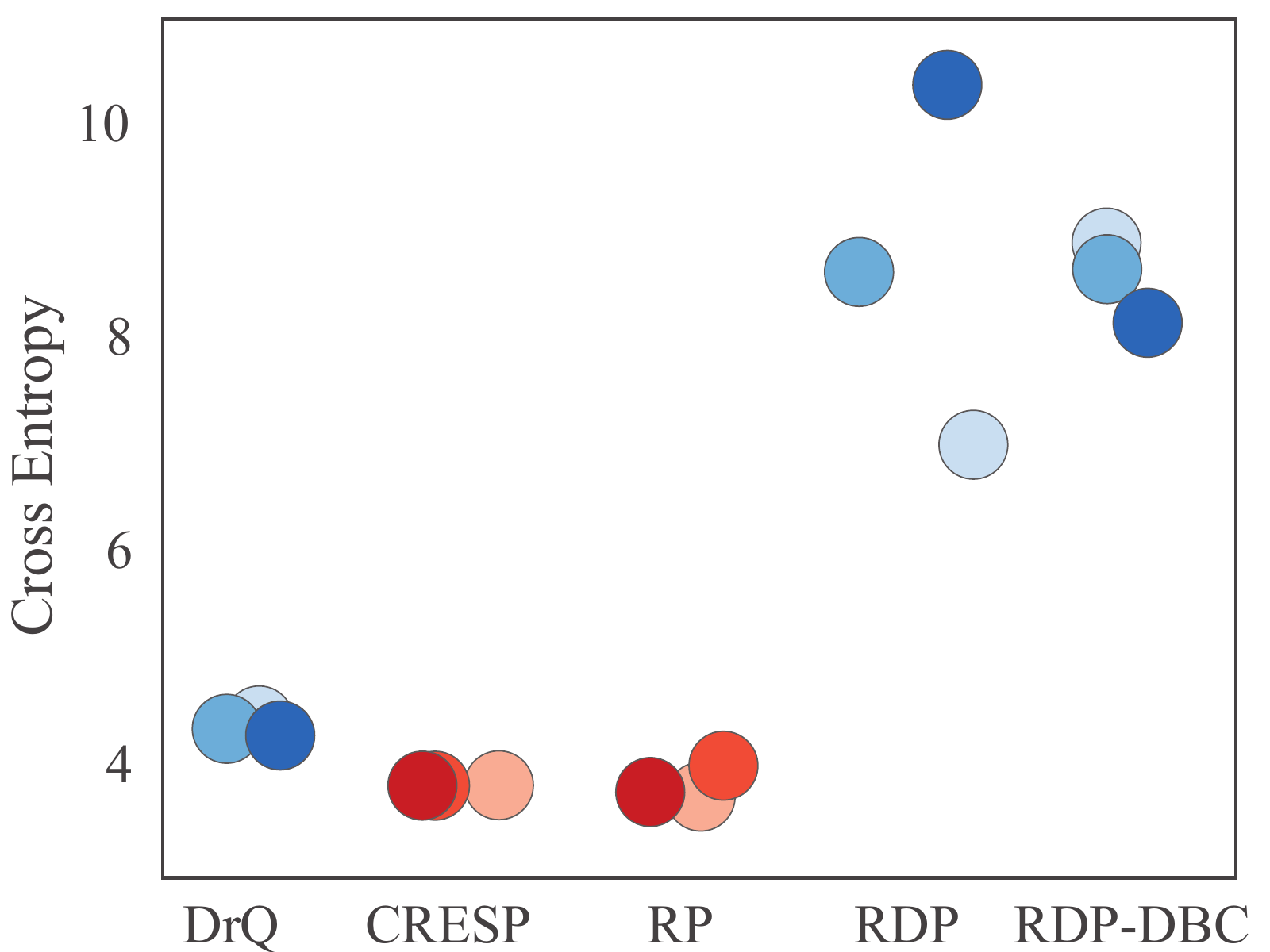}}
		\vskip -0.1in
		\caption{Results of generalization ability of representations. In the first column, low values indicate that the fixed representations are task-irrelevant. On the contrary, low values in the second column indicate that the representations are task-relevant. Best results in each part are shown in red.}
		\Description{Five methods are evaluated to quantify the task-relevant Information by cross-entropy.}
		\vskip -0.1in
		\label{fig-tr-tir}
	\end{figure}
	
	\paragraph{Task irrelevance of Representations}
	In the first column of Figure~\ref{fig-tr-tir}, we evaluate the task irrelevance of representations from different methods.
	We leverage the environmental label, an one-hot vector of the environment numbers from 1 to 20. 
	The information of the environmental label is explicitly task-irrelevant.
	Thus, we propose to measure the mutual information between the fixed representations and the random variable of the environmental label to quantify the task irrelevance.
	Based on contrastive learning, we update the MLP to estimate the mutual information by minimizing the cross-entropy between representations and environmental labels.
	A small cross-entropy indicates a high lower bound on the mutual information. 
	Therefore, a \textit{smaller} cross-entropy means that there is \textit{more task-irrelevant} information in the representations.
	
	The experimental results in the first column of Figure~\ref{fig-tr-tir} reveal that CRESP has less task irrelevance than other methods.
	However, the task-irrelevant information in the representations learned by RP is almost identical to that by RDP.
	This is the empirical evidence that the prediction task of characteristic functions of reward sequence distributions allows the representations to focus on task-relevant information and discard visual distractions. 

	\paragraph{Task relevance of Representations}
	We think of the state as a random vector from the simulation to provide the information of the agent.
	In principle, if the state information is presented in the pixel inputs, the visual RL algorithms should learn the representations to extract the information relevant to the state. Therefore, to evaluate the task relevance, we design another experiment by measuring the mutual information between the state and the learned representation from pixels.
	We also use the collected dataset and the technique to estimate the mutual information by minimizing the cross-entropy between learned representations and collected states.
	Different from the experiment of the task irrelevance, a \textit{smaller} cross-entropy loss means that there is more \textit{task-relevant} information in the learned representations.
	
	The second column in Figure~\ref{fig-tr-tir} shows that CRESP extracts the most task-relevant information by predicting characteristic functions than other methods.
	The representations learned by RP also encode more task-relevant information, but RDP and RDP-DBC extract less task-relevant information. This confirms our point in Section~\ref{subsec-4.1} that the reward sequence distributions can identify the task-relevant information, while predicting observation transition dynamics leads to an impact on this identification.
	

	

	\section{Conclusion}
	Generalization across different environments with the same tasks has been a great challenge in visual RL.
	To address this challenge, we propose a novel approach, namely CRESP, to learn task-relevant representations by leveraging the reward sequence distributions (RSDs).
	To effectively extract the task-relevant information for representation learning via RSDs, we develop an auxiliary task that predicts the characteristic functions of RSDs.
	Experiments on unseen environments with different visual distractions demonstrate the effectiveness of CRESP.
	We plan to extend the idea of CRESP to offline RL settings, which have broad applications in real scenarios.
	
	\section*{Acknowledgements}
	
	We would like to thank all the anonymous reviewers for their insightful comments. This work was supported in part by National Nature Science Foundations of China grants U19B2026, U19B2044, 61836011, and 61836006, and the Fundamental Research Funds for the Central Universities grant WK3490000004.

\bibliographystyle{ACM-Reference-Format}
\bibliography{sample-base}

\appendix

\section{Proofs}
\begin{theorem}
	\label{app-thm:bd}
	Let $\Phi:\mathcal{O}\to\mathcal{Z}$ be a $T$-level representation, $V^e_*:\mathcal{O}\to\mathbb{R}$ be the optimal value function in the environment $e\in\mathcal{E}$, and $\Bar{V}^e_*:\mathcal{Z}\to\mathbb{R}$ be the optimal value function on the latent representation space, built on top of the representation $\Phi$. Let $\bar{r}$ be a bound of the reward space, i.e., $|r|<\bar{r}$ for any $r\in\mathcal{R}$.
	Then we have
	\begin{align*}
		0\le V^e_*(o)-\Bar{V}^e_*\circ\Phi(o)\le\frac{2\gamma^T}{1-\gamma}\bar{r}
	\end{align*}
	for any $o\in\mathcal{O}$ and $e\in\mathcal{E}$.
\end{theorem}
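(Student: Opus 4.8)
The plan is to split the infinite-horizon return into a \emph{head} consisting of the first $T$ discounted rewards and a \emph{tail} consisting of everything afterwards, argue that the head is exactly preserved by a $T$-level representation, and show the tail is uniformly small. Concretely, for any policy and any starting observation $o$ I would write the return as $\sum_{t=0}^{T-1}\gamma^t R_t + \gamma^T\sum_{t=0}^{\infty}\gamma^t R_{T+t}$, and note that since $|r|<\bar{r}$ the discounted tail $\gamma^T\sum_{t=0}^{\infty}\gamma^t R_{T+t}$ lies in $[-\tfrac{\gamma^T}{1-\gamma}\bar{r},\,\tfrac{\gamma^T}{1-\gamma}\bar{r}]$. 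For the lower bound $\bar{V}^e_*\circ\Phi(o)\le V^e_*(o)$, I would observe that any policy acting on the latent space $\mathcal{Z}$ induces, through $\Phi$, a policy on $\mathcal{O}$ with the same return, so the optimum over the richer observation-based policy class can only be larger; hence $V^e_*(o)-\bar{V}^e_*\circ\Phi(o)\ge 0$.

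For the upper bound I would compare both value functions to the common $T$-step truncated optimum $U^*_T(o):=\sup_\pi \mathbb{E}^e_\pi[\sum_{t=0}^{T-1}\gamma^t R_t\mid O_0=o]$. Using the tail estimate gives $V^e_*(o)\le U^*_T(o)+\tfrac{\gamma^T}{1-\gamma}\bar{r}$, while conversely the latent optimum is at least the value of the latent policy that reproduces the head-optimal behaviour for $T$ steps and then acts arbitrarily, giving $\bar{V}^e_*\circ\Phi(o)\ge U^*_T(o)-\tfrac{\gamma^T}{1-\gamma}\bar{r}$. Subtracting the two inequalities then yields $V^e_*(o)-\bar{V}^e_*\circ\Phi(o)\le\tfrac{2\gamma^T}{1-\gamma}\bar{r}$, with the factor $2$ coming from the two independent tail contributions.

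The crux, and the step I expect to be the main obstacle, is justifying that the truncated optimum $U^*_T$ is genuinely a function of $\Phi(o)$, i.e. that a $T$-level representation preserves the optimal first-$T$-step value. This is exactly where the definition of the $T$-level reward sequence representation enters: since $\Phi(o)=\Phi(o')$ forces $p(\mathbf{r}\mid o,\mathbf{a})=p(\mathbf{r}\mid o',\mathbf{a})$ for every action sequence $\mathbf{a}\in\mathcal{A}^T$, the entire joint law of the first $T$ rewards is determined by $\Phi(o)$, so the head term of the return depends on $o$ only through $\Phi(o)$. The delicate point to handle carefully is the interaction between this open-loop reward-sequence description and closed-loop (adaptive) policies, which must be reconciled with the precise construction of the latent optimal value function $\bar{V}^e_*$; once the head is shown to match, the tail bound and the factor of $2$ follow immediately.
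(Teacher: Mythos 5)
Your proposal follows essentially the same route as the paper's proof: the lower bound comes from the latent policy class being a restriction of the observation-based one, and the upper bound comes from splitting the return at time $T$ and absorbing the two tail contributions into $\tfrac{\gamma^T}{1-\gamma}\bar{r}$ each, with the $T$-level property guaranteeing the heads match. The one step you flag as the obstacle---transferring the $T$-step head value through $\Phi$ for closed-loop policies---is exactly where the paper does its work, by fixing an optimal policy $\pi_*$ that depends only on the generating state, defining the latent policy as $\hat{\pi}(z)=\pi_*(\bar{o})$ for a representative observation $\bar{o}$ with $\Phi(\bar{o})=z$, and invoking equality of the reward sequence distributions $p(\cdot\mid o,\mathbf{a})=p(\cdot\mid \bar{o},\mathbf{a})$; your intermediate truncated optimum $U^*_T$ is a cosmetic repackaging of that same comparison rather than a genuinely different argument.
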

\begin{proof}
	By the definition of optimal value function, obviously we have $\Bar{V}^e_*\circ\Phi(o)\le V^e_*(o)$.
	It suffices to show that $V^e_*(o)-\Bar{V}^e_*\circ\Phi(o)\le\frac{\gamma^T}{1-\gamma}\bar{r}$.
	Without loss of generality, let $\pi_*$ be an optimal policy such that $\pi_*(\cdot|o)=\pi_*(\cdot|o')$ if $[o]_s=[o']_s$. 
	Let $\hat{\pi}$ be any policy built on top of $\Phi$.
	Then we have
	\begin{align*}
		&V^e_*(o)-\Bar{V}^e_*\circ\Phi(o)\le \mathbb{E}_{\pi_*}^e\left[\sum_{t=0}^\infty\gamma^tR_{t+1}\right]-\mathbb{E}_{\hat{\pi}\circ\Phi}^e\left[\sum_{t=0}^\infty\gamma^tR_{t+1}\right]\\
		\le& \mathbb{E}_{\pi_*}^e\left[\sum_{t=0}^{T-1}\gamma^tR_{t+1}\right] - \mathbb{E}_{\hat{\pi}\circ\Phi}^e\left[\sum_{t=0}^{T-1}\gamma^tR_{t+1}\right]\\
		&+ \sum_{t=T}^\infty \gamma^t \left|\mathbb{E}_{\pi_*}^e\left[R_{t+1}\right]-\mathbb{E}_{\hat{\pi}\circ\Phi}^e\left[R_{t+1}\right]\right|\\
		\le& \mathbb{E}_{\pi_*}^e\left[\sum_{t=0}^{T-1}\gamma^tR_{t+1}\right] - \mathbb{E}_{\hat{\pi}\circ\Phi}^e\left[\sum_{t=0}^{T-1}\gamma^tR_{t+1}\right]+ 2\sum_{t=T}^\infty \gamma^t \bar{r}\\
		\le& \mathbb{E}_{\pi_*}^e\left[\sum_{t=0}^{T-1}\gamma^tR_{t+1}\right] - \mathbb{E}_{\hat{\pi}\circ\Phi}^e\left[\sum_{t=0}^{T-1}\gamma^tR_{t+1}\right] +\frac{2\gamma^T}{1-\gamma}\bar{r}.
	\end{align*}
	We are now to show that there exists a $\hat{\pi}$ such that \begin{align*}
		\mathbb{E}_{\pi_*}^e\left[\sum_{t=0}^{T-1}\gamma^tR_{t+1}\right] = \mathbb{E}_{\hat{\pi}\circ\Phi}^e\left[\sum_{t=0}^{T-1}\gamma^tR_{t+1}\right].
	\end{align*}
	For any two observations $o,o'\in\mathcal{O}$ such that $\Phi(o)=\Phi(o')$, since $\Phi$ is a $T$-level reward sequence representation, the RSDs $p(\mathbf{r}|o,\mathbf{a})=p(\mathbf{r}|o',\mathbf{a})$.
	Thus we have 
	\begin{align*}
		\mathbb{E}_{\hat{\pi}\circ\Phi}^e\left[\sum_{t=0}^{T-1}\gamma^tR_{t+1}|O_0=o\right]=\mathbb{E}_{\hat{\pi}\circ\Phi}^e\left[\sum_{t=0}^{T-1}\gamma^tR_{t+1}|O_0=o'\right]
	\end{align*} for any $\hat{\pi}$.
	Define $\hat{\pi}:\mathcal{Z}\to\Delta(\mathcal{R}^T)$ as $\hat{\pi}(z)=\pi_*(\bar{o})$, where $\bar{o}$ is an representative observation such that $\Phi(\bar{o})=z$. 
	Then we have
	\begin{align*}
		\mathbb{E}_{\hat{\pi}\circ\Phi}^e\left[\sum_{t=0}^{T-1}\gamma^tR_{t+1}|O_0=o\right]=&\mathbb{E}_{\hat{\pi}\circ\Phi}^e\left[\sum_{t=0}^{T-1}\gamma^tR_{t+1}|O_0=\bar{o}\right]\\
		=&\mathbb{E}_{\pi_*}^e\left[\sum_{t=0}^{T-1}\gamma^tR_{t+1}\right],
	\end{align*}
	which completes the proof.
\end{proof}

\begin{theorem}
	\label{app-thm:cf}
	A representation $\Phi:\mathcal{O}\to\mathcal{Z}$ is a $T$-level reward sequence representation if and only if there exits a predictor $\Psi$ such that
	\begin{align*}
		\Psi(\bm{\omega};\Phi(o),\mathbf{a})=\varphi_{\mathbf{R}|o,\mathbf{a}}(\bm{\omega})=\mathbb{E}_{\mathbf{R}\sim p(\cdot|o,\mathbf{a})}\left[e^{i\langle\bm{\omega},\mathbf{R}\rangle}\right],
	\end{align*}
	for all $\bm{w}\in\mathbb{R}^{T},o\in\mathcal{O}$ and $\mathbf{a}\in\mathcal{A}^T$.
\end{theorem}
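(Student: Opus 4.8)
The plan is to prove the equivalence directly from two facts already available: the definition of a $T$-level reward sequence representation (there exists $f$ with $f(\mathbf{r};\Phi(o),\mathbf{a})=p(\mathbf{r}|o,\mathbf{a})$) and the uniqueness theorem for characteristic functions (Lemma~\ref{lem:cf}). The characteristic-function machinery does essential work only in the reverse direction; the forward direction is a Fourier-transform bookkeeping step.

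For the forward direction, suppose $\Phi$ is a $T$-level reward sequence representation, so that some $f$ with $f(\mathbf{r};\Phi(o),\mathbf{a})=p(\mathbf{r}|o,\mathbf{a})$ exists. I would simply push $f$ through the Fourier transform and define
\[
\Psi(\bm{\omega};z,\mathbf{a}):=\int_{\mathcal{R}^T}e^{i\langle\bm{\omega},\mathbf{r}\rangle}f(\mathbf{r};z,\mathbf{a})\,\mathrm{d}\mathbf{r}.
\]
Since the integrand has modulus at most $1$ and $f(\cdot;z,\mathbf{a})$ integrates to $1$, this is well defined, and since $f$ depends on $o$ only through $z=\Phi(o)$, so does $\Psi$. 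Substituting $z=\Phi(o)$ recovers $\int e^{i\langle\bm{\omega},\mathbf{r}\rangle}p(\mathbf{r}|o,\mathbf{a})\,\mathrm{d}\mathbf{r}=\varphi_{\mathbf{R}|o,\mathbf{a}}(\bm{\omega})$, which is exactly the required identity.

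For the reverse direction, the core is to show that $p(\mathbf{r}|o,\mathbf{a})$ factors through $\Phi(o)$. Assume a predictor $\Psi$ exists with $\Psi(\bm{\omega};\Phi(o),\mathbf{a})=\varphi_{\mathbf{R}|o,\mathbf{a}}(\bm{\omega})$ for all arguments, and take any $o,o'$ with $\Phi(o)=\Phi(o')$. Then for every $\bm{\omega}$ and $\mathbf{a}$,
\[
\varphi_{\mathbf{R}|o,\mathbf{a}}(\bm{\omega})=\Psi(\bm{\omega};\Phi(o),\mathbf{a})=\Psi(\bm{\omega};\Phi(o'),\mathbf{a})=\varphi_{\mathbf{R}|o',\mathbf{a}}(\bm{\omega}),
\]
so the two reward sequences share a characteristic function; by Lemma~\ref{lem:cf} they share a distribution, i.e.\ $p(\mathbf{r}|o,\mathbf{a})=p(\mathbf{r}|o',\mathbf{a})$ for all $\mathbf{r},\mathbf{a}$. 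Hence $p(\mathbf{r}|o,\mathbf{a})$ is constant on each fiber of $\Phi$, so I may set $f(\mathbf{r};z,\mathbf{a}):=p(\mathbf{r}|o_z,\mathbf{a})$ for any representative $o_z$ with $\Phi(o_z)=z$; the previous step guarantees this is well defined and satisfies $f(\mathbf{r};\Phi(o),\mathbf{a})=p(\mathbf{r}|o,\mathbf{a})$, making $\Phi$ a $T$-level reward sequence representation.

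The only genuine obstacle is that the definition here uses the weighted inner product $\langle\bm{\omega},\mathbf{r}\rangle=\sum_{t=1}^T\gamma^t\omega_t r_t$ rather than the standard one, whereas Lemma~\ref{lem:cf} is stated for ordinary characteristic functions. I would dispatch this by noting that, for $\gamma>0$, the diagonal scaling $\bm{\omega}\mapsto(\gamma\omega_1,\dots,\gamma^T\omega_T)$ is a bijection of $\mathbb{R}^T$, so prescribing the weighted characteristic function on all of $\mathbb{R}^T$ is equivalent to prescribing the ordinary one on all of $\mathbb{R}^T$; Lemma~\ref{lem:cf} then applies verbatim to the reparametrized argument. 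Everything else reduces to this uniqueness statement together with the quotient-by-$\Phi$ construction.
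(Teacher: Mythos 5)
Your proof is correct and follows essentially the same route as the paper: both reduce the equivalence to the observation that the map $o\mapsto(\mathbf{a}\mapsto p(\cdot|o,\mathbf{a}))$ factors through $\Phi$ exactly when the corresponding map to characteristic functions does, via the uniqueness theorem of Lemma~\ref{lem:cf}, with your two explicit directions (Fourier transform of $f$, and well-definedness on fibers of $\Phi$) just unpacking the paper's appeal to a bijection between distributions and their characteristic functions. If anything you are more careful than the paper, which silently applies the uniqueness theorem to the $\gamma$-weighted characteristic function; your reparametrization $\bm{\omega}\mapsto(\gamma\omega_1,\dots,\gamma^T\omega_T)$ (valid for $\gamma>0$) supplies the justification for that step.
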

\begin{proof}
	By definition, $\Phi$ is a $T$-level reward sequence representation if and only if there exists $f$ such that
	\begin{align*}
		f(\mathbf{r};\Phi(o),\mathbf{a})=p(\mathbf{r}|o,\mathbf{a}).
	\end{align*}
	Let $\mathcal{M}$ be the set of mappings from $\mathcal{A}^T$ to $\Delta(\mathcal{R}^T)$. Then $\Phi$ is a $T$-level reward sequence representation if and only if there exists $\hat{f}:\mathcal{Z}\to\mathcal{M}$ such that $\hat{f}\circ\Phi (o)=M_o$ for any $o\in\mathcal{O}$, where $M_o(\mathbf{a})=p(\cdot|o,\mathbf{a})$ for any $\mathbf{a}\in\mathcal{A}^T$.
	Since a characteristic function uniquely determines a distribution, and vice versa, there exists a bijection between the distributions $p(\cdot|o,\mathbf{a})$ and their corresponding characteristic functions $\varphi_{\mathbf{R}|o,\mathbf{a}}(\bm{\omega})=\mathbb{E}_{\mathbf{R}\sim p(\cdot|o,\mathbf{a})}\left[e^{i\langle\bm{\omega},\mathbf{R}\rangle}\right]$.
	As a result, $\Phi$ is a $T$-level reward sequence representation if and only if there exists $\Tilde{f}:\mathcal{Z}\to\mathcal{M}$ such that $\Tilde{f}\circ\Phi (o)=\Tilde{M}_o$ for any $o\in\mathcal{O}$, where $M_o(\mathbf{a})=\varphi_{\mathbf{R}|o,\mathbf{a}}(\cdot)$ for any $\mathbf{a}\in\mathcal{A}^T$, which is equivalent to what we want.
\end{proof}

\section{Research Methods}
\label{app-rm}

\subsection{Implementation Details}

\paragraph{Dynamic Background Distractions}
We follow the dynamic background settings of Distracting Control Suite (DCS)~\citep{corr/abs-2101-02722}.
We take the 2 first videos from the DAVIS 2017 training set and randomly sample a scene and a frame from those at the start of every episode. In the RL training process, we alternate the 2 videos. Moreover, we set $\beta_{\text{bg}}=1.0$, which means we use the distracting background instead of original skybox.
We then apply the 30 videos from the DAVIS 2017 validation dataset for evaluation.

\paragraph{Dynamic Color Distractions}
In dynamic color settings of DCS~\citep{corr/abs-2101-02722}, the color is sampled uniformly per channel $x_{0} \sim \mathcal{U}(x-\beta, x+\beta)$ at the start of each episode, where $x$ is the original color in DCS, and $\beta$ is a hyperparameter. We enable the dynamic setting that the color $x_t$ changes to $x_{t+1} = \text{clip} (\hat{x}_{t+1}, x_t - \beta, x_t + \beta)$, where $\hat{x}_{t+1} \sim \mathcal{N}(x_t, 0.03\cdot\beta)$. During the training process, we use $\beta_1=0.1$, $\beta_2=0.2$ and evaluate agents with $\beta_{\text{test}}=0.5$.

\paragraph{Network Details}
\label{app-nd}
A shared pixel encoder utilizes four convolutional layers using 3 × 3 kernels and 32 filters with an stride of 2 for the first layer and 1 for others. Rectified linear unit (ReLU) activations are applied after each convolution. Thereafter, a 50 dimensional output dense layer normalized by LayerNorm is applied with a $\tanh$ activation. Both critic and actor networks are parametrized with 3 fully connected layers using ReLU activations up until the last layer. The output dimension of these hidden layers is 1024.  The pixel encoder weights are shared for the critic and the actor, and gradients of the encoder are not computed through the actor optimizer.
Moreover, we use the random cropping for image pre-processing proposed by DrQ and RAD~\citep{laskin2020reinforcement} as a weak augmentation without prior knowledge of test environments.
For the predictor of characteristic functions, we use 3 additional layers and ReLU after the pixel encoder.

\begin{figure}[h]
	\centering
	\hspace{-0.1in}
	\includegraphics[width=2.4cm]{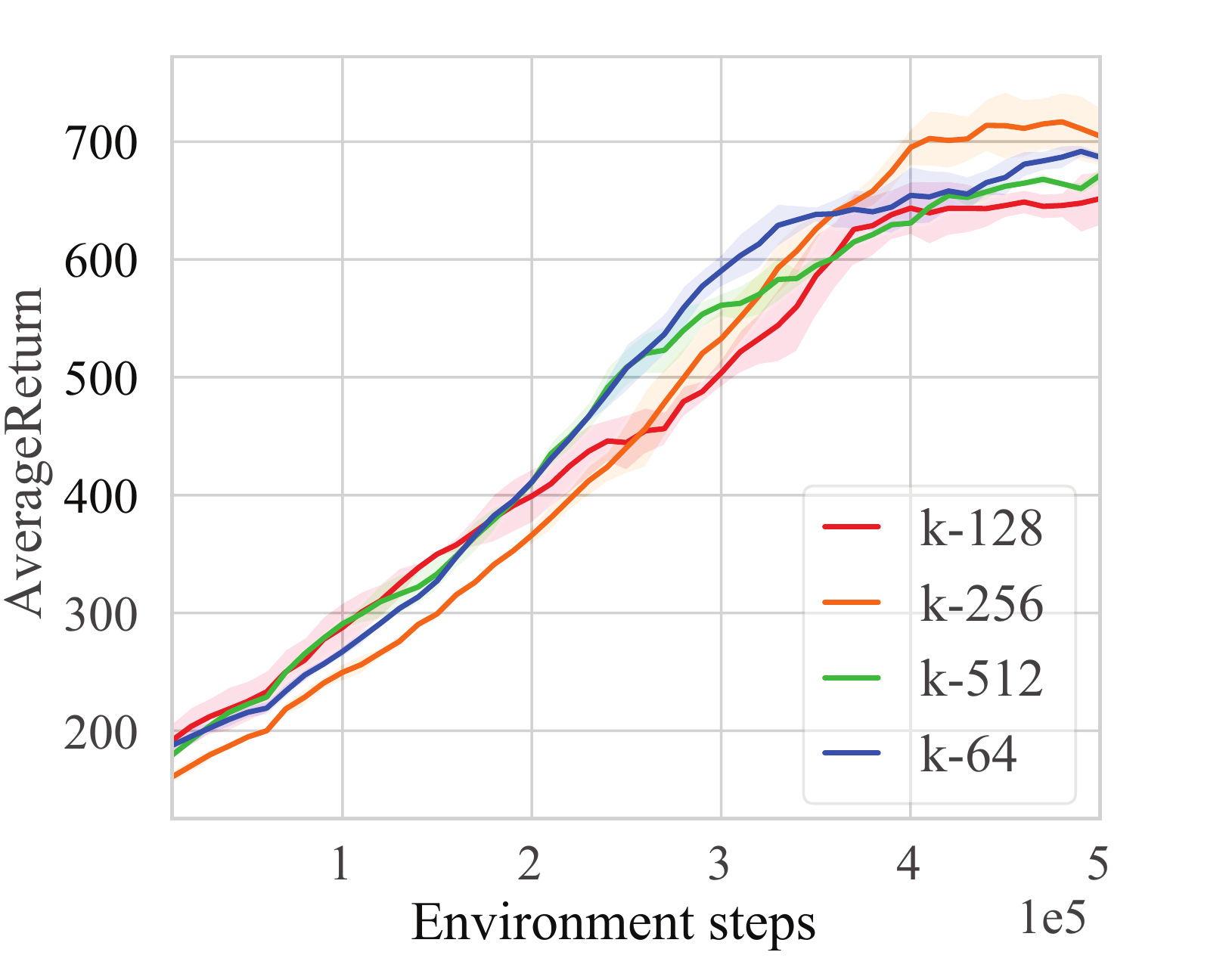}
	\hskip 0.1in
	\includegraphics[width=2.4cm]{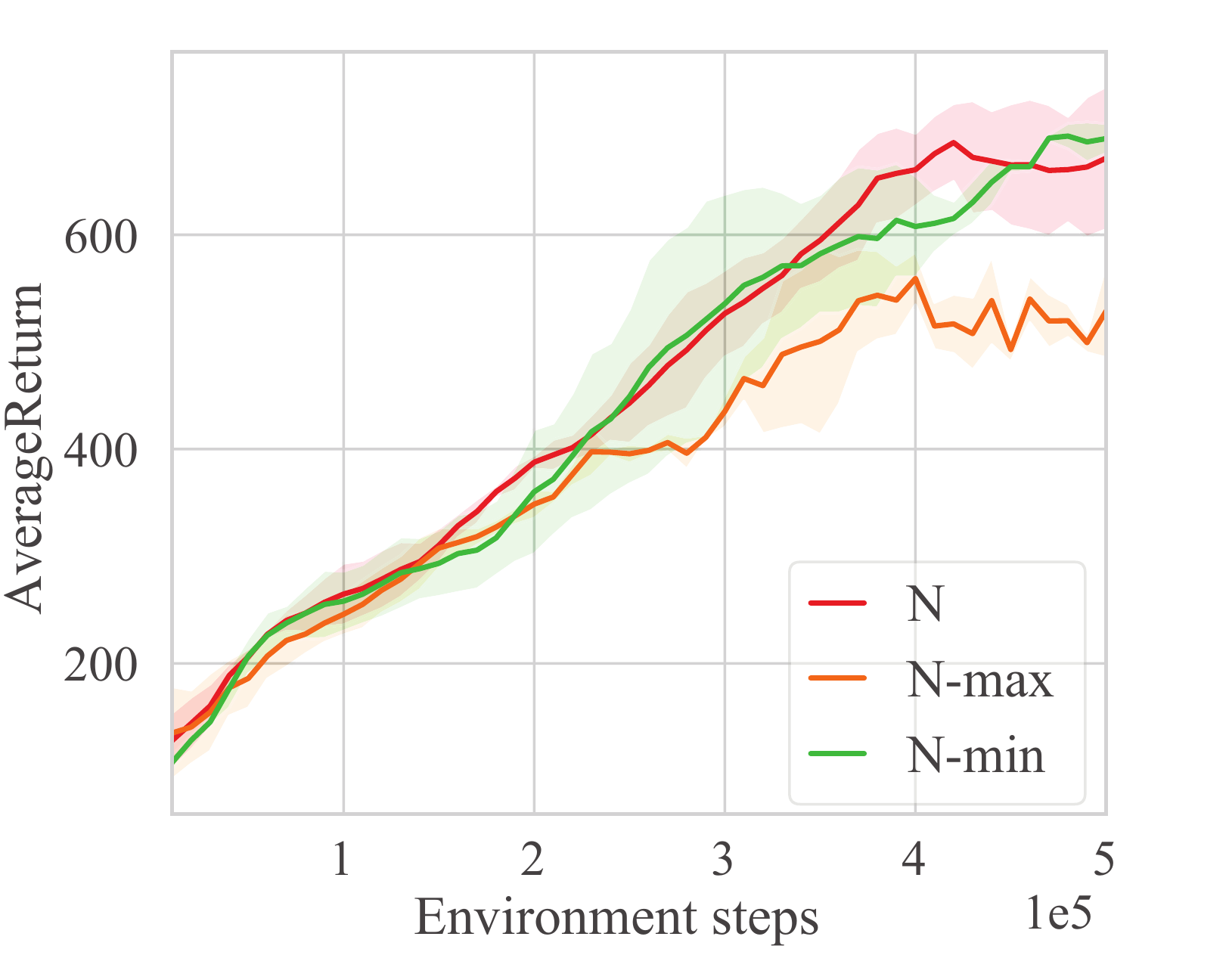}
	\vskip -0.1in
	\caption{Ablation studies of CRESP with batch size 128 and 2 random seeds on cartpole-swingup with dynamic backgrounds. In the left, $\kappa$ is sample number of $\Omega$ from $\mathcal{N}$. In the right, N is CRESP with the standard Gaussian distribution. N-max and N-min are CRESP with Gaussian distributions that maximize and minimize the auxiliary loss, respectively.}
	\Description{The generalization performance of CRESP on unseen background dynamics.}
	\label{fig-k}
\end{figure}

\paragraph{Auxiliary Loss in CRESP}
\label{app-pred-loss}
For all our experiments, we estimate the auxiliary loss $\mathcal{L}^{\mathcal{N}}_{\mathcal{D}}(\Phi,\Psi)$ to learn representations. For the computation of $\mathbb{E}_{\Omega\sim\mathcal{N}}\left[\cdot\right]$ in $\mathcal{L}^{\mathcal{N}}_{\mathcal{D}}(\Phi,\Psi)$, we take an ablation study of the sample number $\kappa$ of $\Omega$ in the left of Figure~\ref{fig-k}. The results show that $\kappa=256$ performs better than others. Then, for the choice of the distribution $\mathcal{N}$, we also parameterize this distribution to maximize the auxiliary loss (N-max) or minimize (N-min). The results with batch size 128 in the right of Figure~\ref{fig-k} indicate that PN-max is not stable and PN-min is similar to CRESP with the standard Gaussian distribution. For computational efficiency, we choose the standard Gaussian distribution in all experiments.

Furthermore, we list the hyperparameters in Table~\ref{table-hypp}. 

\begin{table}[h]
	\caption{Hyperparameters in the Distracting Control Suite.}
	\vskip -0.1in
	\label{table-hypp}
	\centering
	\begin{tabular}{l|l}
		\toprule
		\textbf{Hyperparameter}             & \textbf{Setting} \\
		\midrule
		Optimizer                           & Adam \\
		Discount $\gamma$                   & 0.99 \\
		Learning rate                       & 0.0005 \\
		Number of batch size                & 256 \\
		Number of hidden layers             & 2 \\
		Number of hidden units per layer    & 1024 \\
		Replay buffer size                  & 100,000 \\
		Initial steps                       & 1000 \\
		Target smoothing coefficient $\tau$ & 0.01 \\
		Critic target update frequency      & 2 \\
		Actor update frequency              & 1 \\
		Actor log stddev bounds             & [-5, 2] \\
		Initial temperature $\alpha$        & 0.1 \\
		\midrule
		\emph{Hyperparameters of CRESP}     & \\
		\quad Gaussian distribution $\mathcal{N}$ & $\mathcal{N}(\textbf{0}, \textbf{1})$\\
		\quad Sample number $\kappa$ from $\mathcal{N}$ & 256 \\
		\quad Discount of reward sequences & 0.8 \\
		\bottomrule
	\end{tabular}
\end{table}

\subsection{Additional Results}
\label{app-ar}

\begin{figure}
	\centering
	\includegraphics[width=8.6cm]{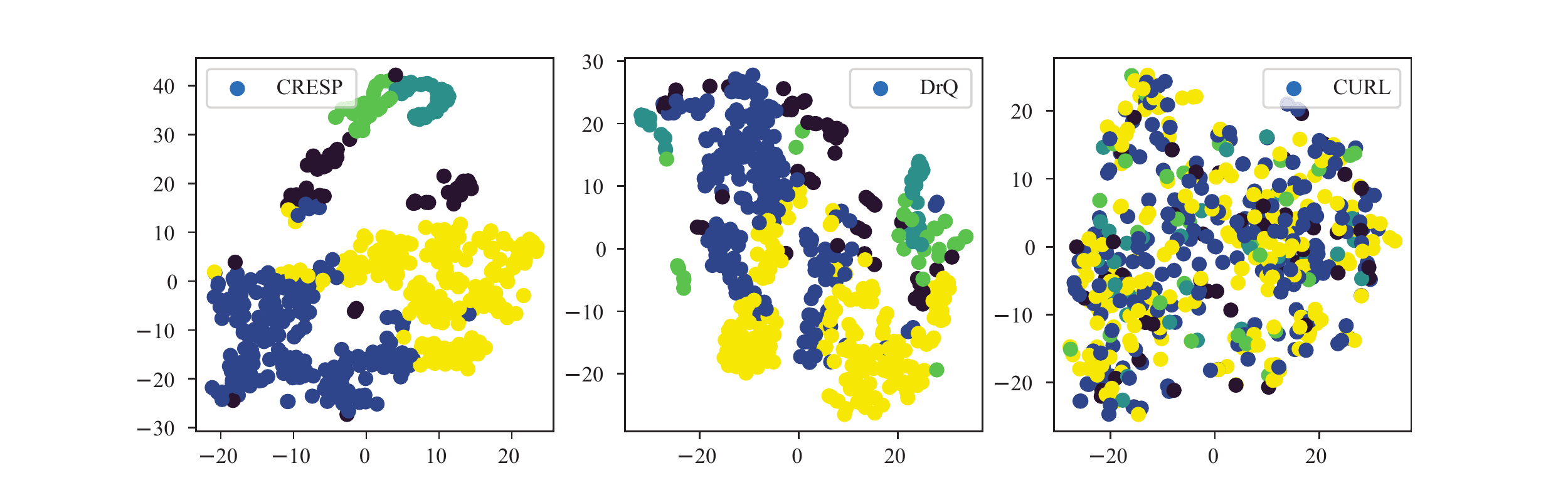}
	\vskip -0.1in
	\caption{t-SNE visualization of latent representations learned by CRESP (left), DrQ (center), and CURL (right) in cartpole-swingup with dynamic backgrounds.}
	\Description{t-SNE of latent spaces learned by CRESP and Drq.}
	\label{fig-tsne-cs}
	\vskip -0.1in
\end{figure}

\begin{figure}[h]
	\centering
	\includegraphics[width=2.4cm]{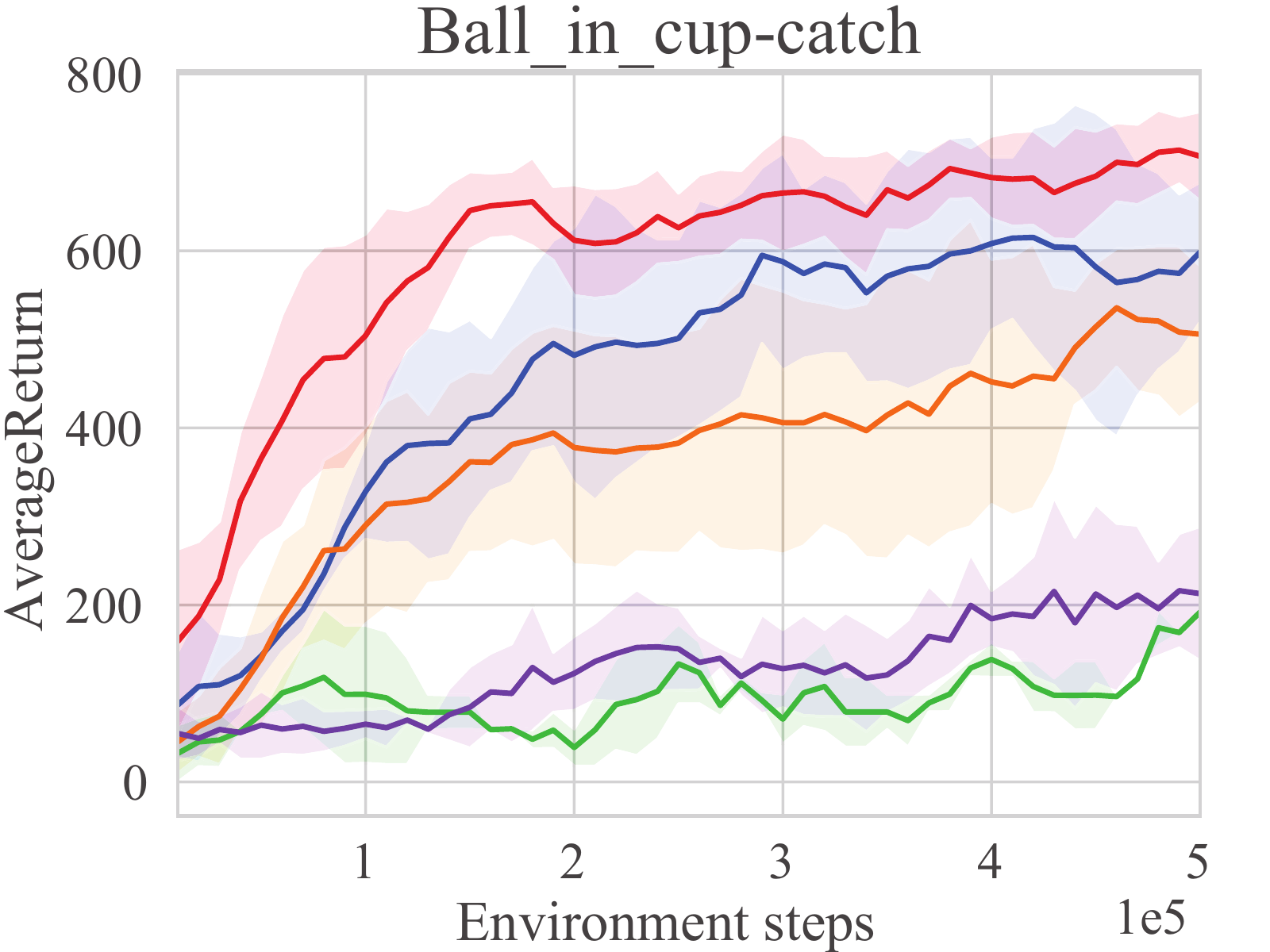}
	\hskip 0.15in
	\includegraphics[width=2.4cm]{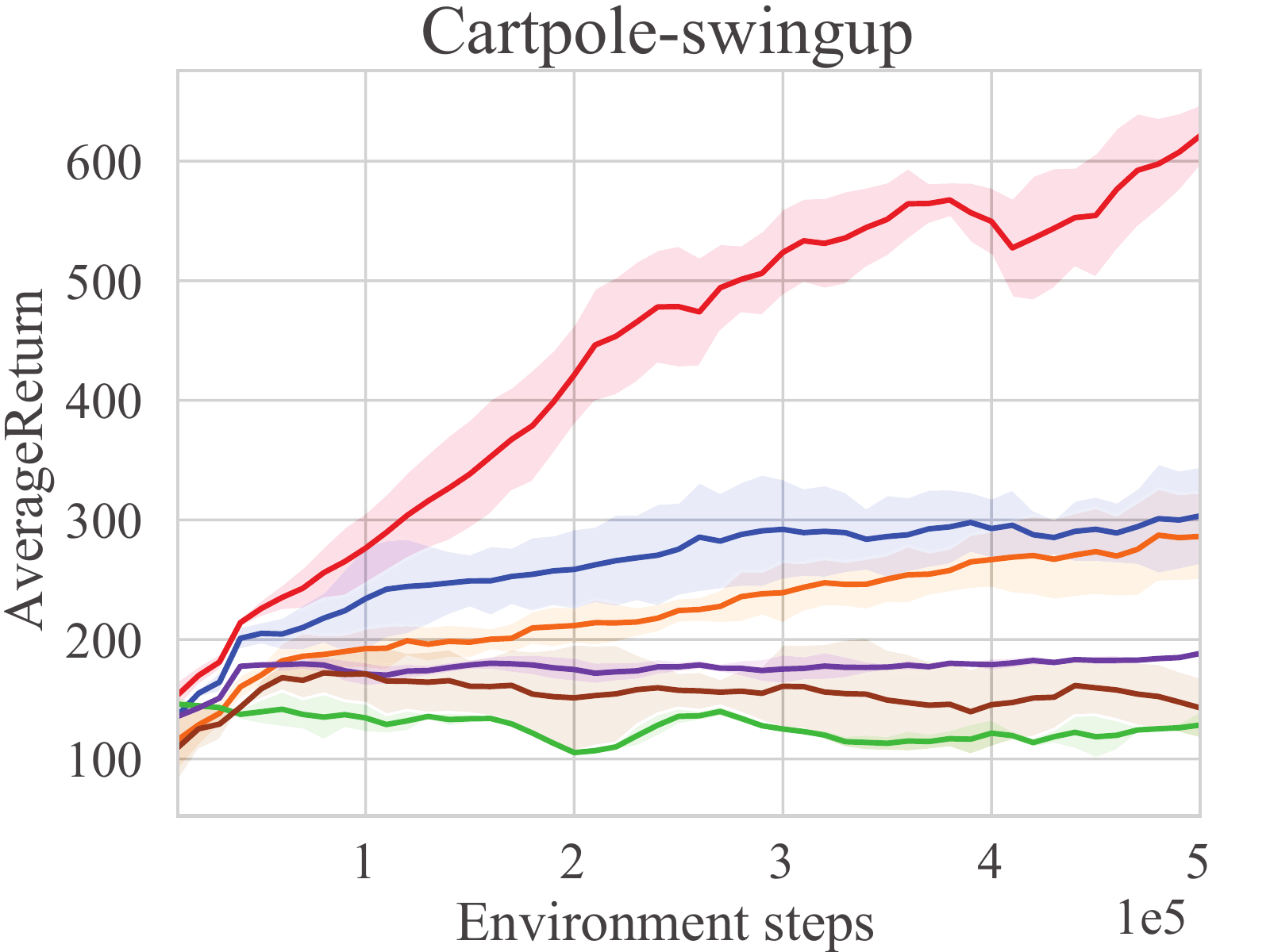}
	\hskip 0.15in
	\includegraphics[width=2.4cm]{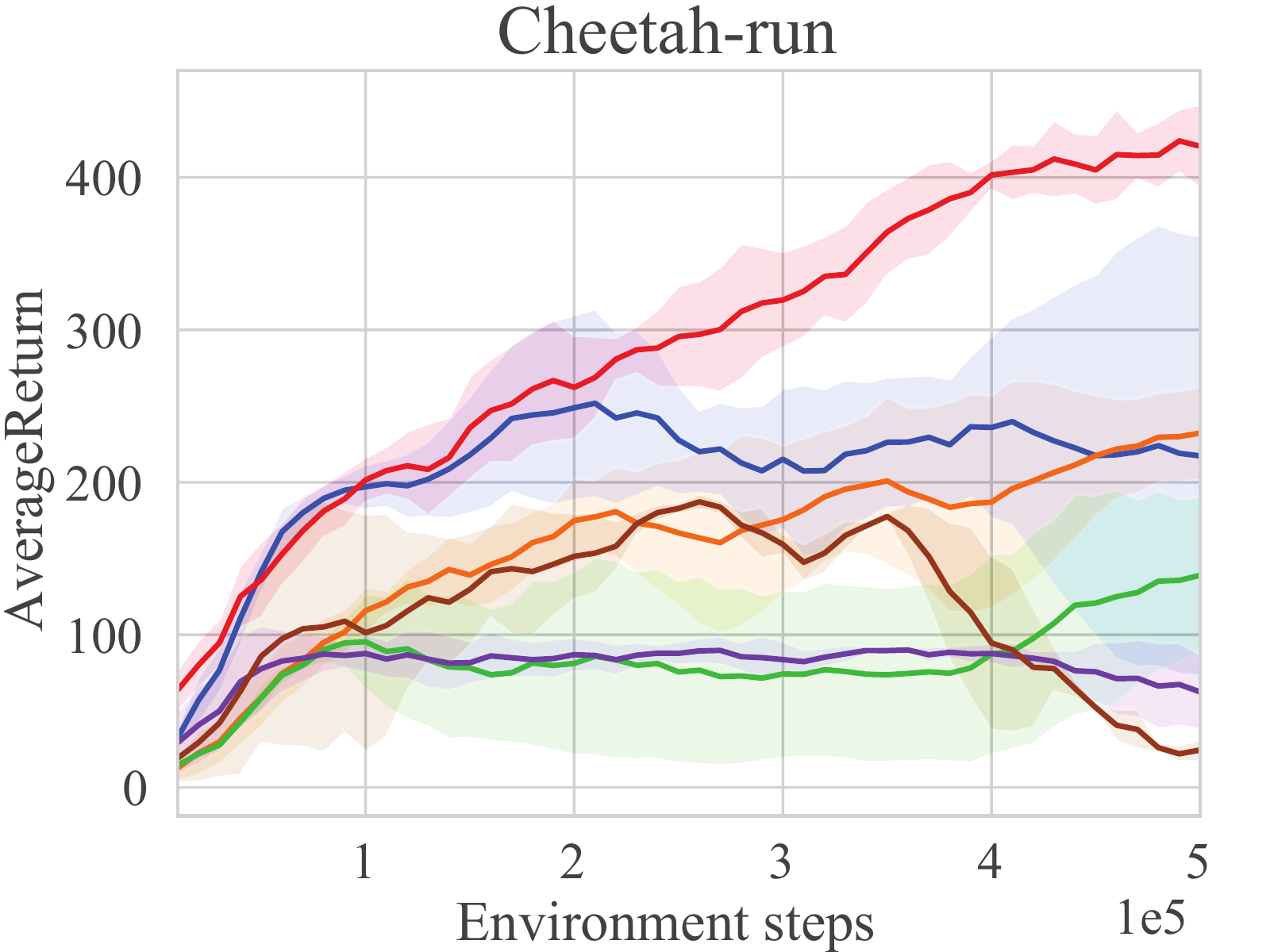}
	
	\includegraphics[width=2.4cm]{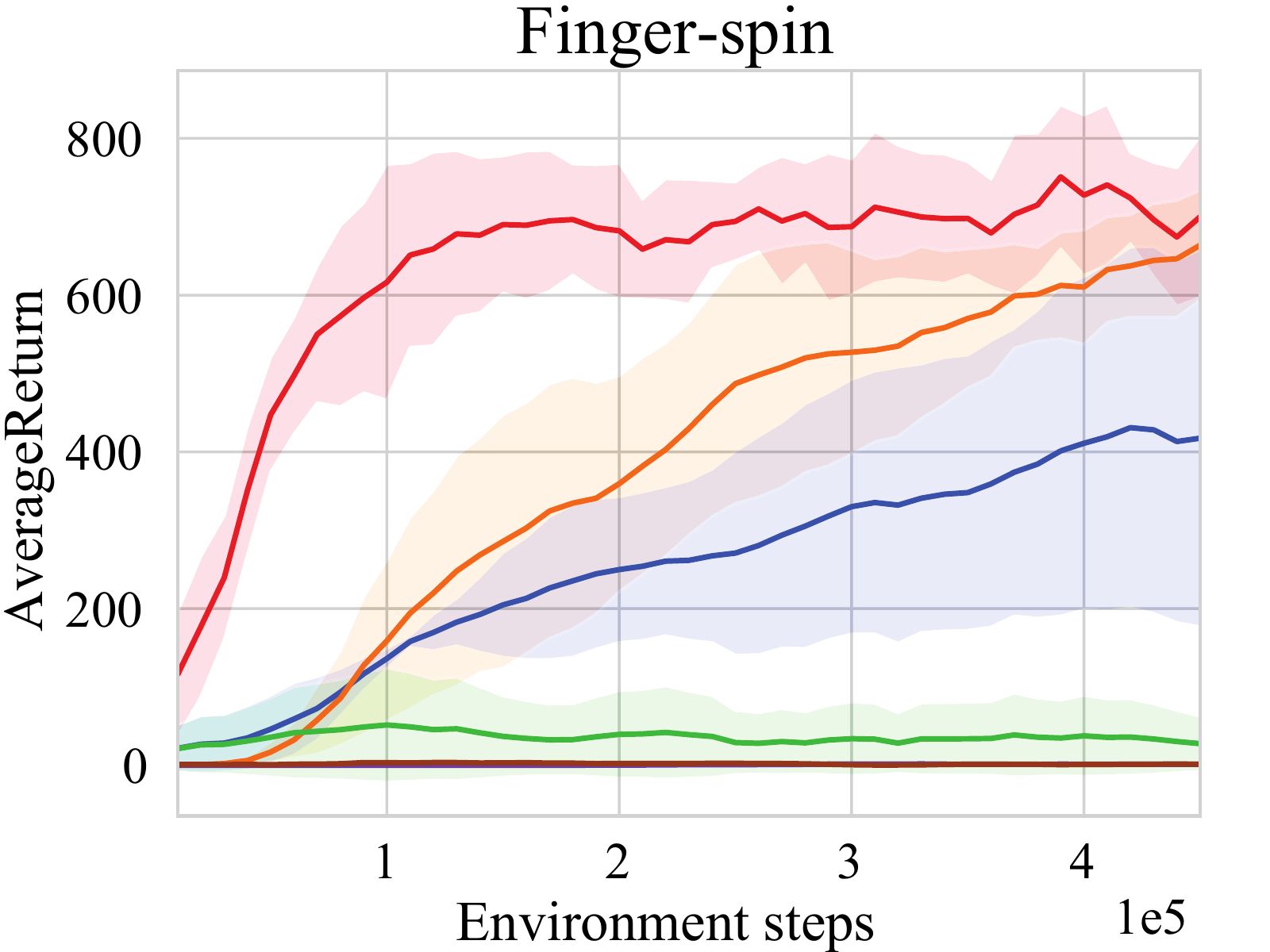}
	\hskip 0.15in
	\includegraphics[width=2.4cm]{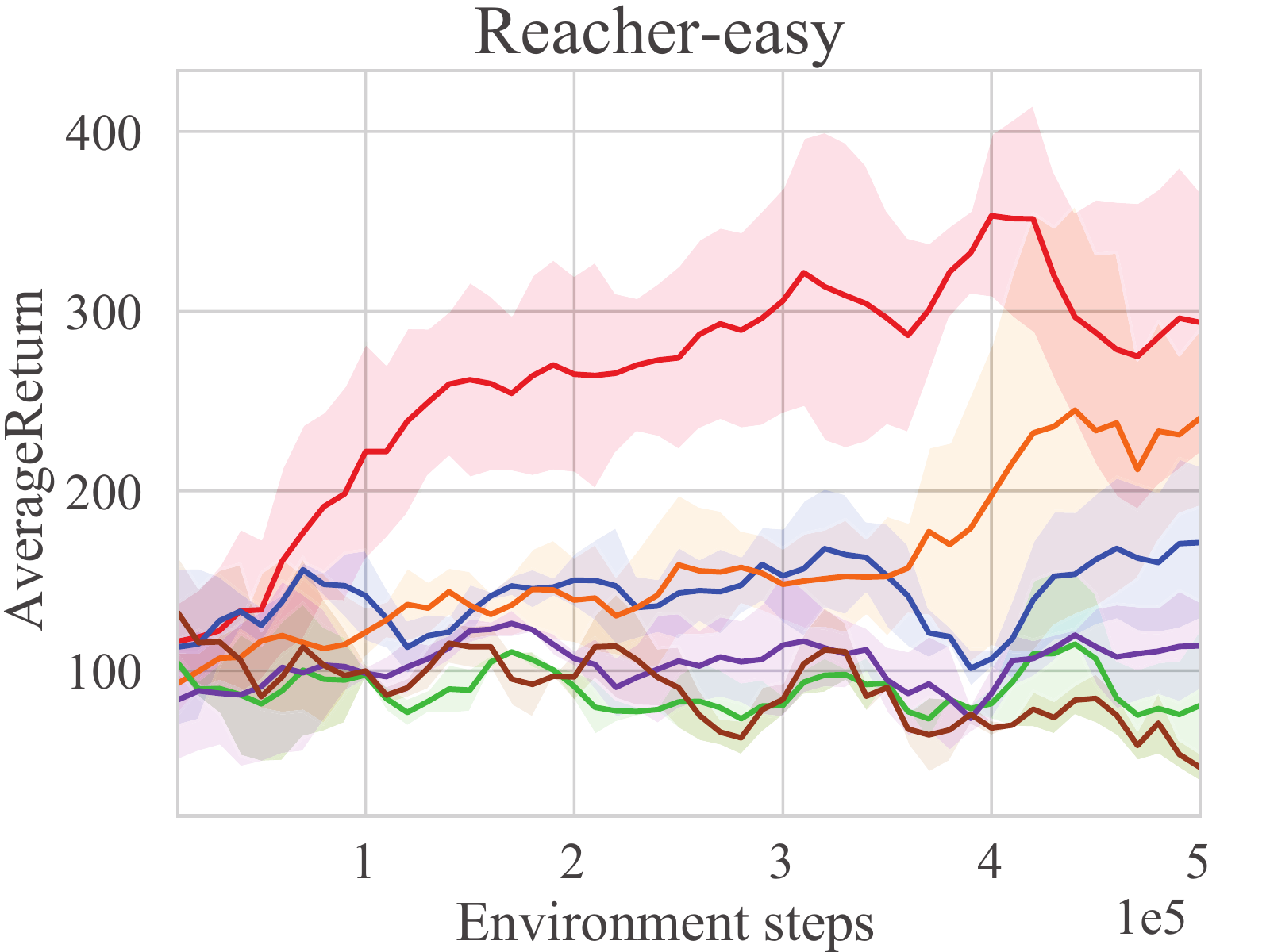}
	\hskip 0.15in
	\includegraphics[width=2.4cm]{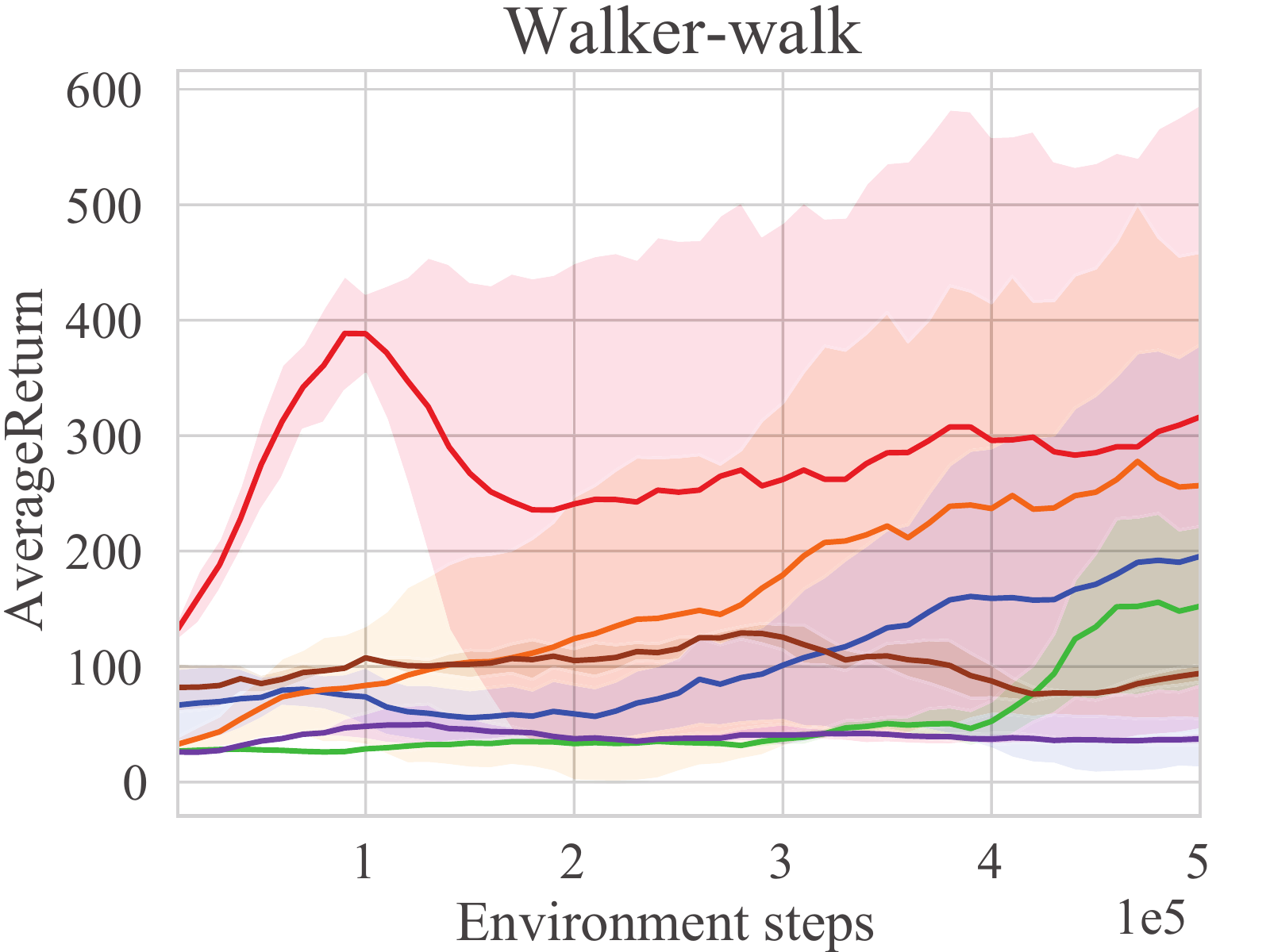}
	
	\includegraphics[width=7.4cm]{b-legend.pdf}
	\vskip -0.1in
	\caption{Learning curves of six methods on six tasks with dynamic color distractions for 500K environment steps.}
	\Description{The performances of different algorithms on unseen environments with dynamic color distractions.}
	\vskip -0.1in
	\label{fig-result-dc}
\end{figure}

\paragraph{Additional Curves}
In Figure~\ref{fig-result-dc} we show learning curves under the default settings on 6 different environments from DCS with dynamic color distractions.

\paragraph{Additional Visualizations}
In addition to Figure~\ref{fig-tsne-cs}, we also visualize the representations of CRESP and DrQ in cartpole-swingup task via t-SNE. We leverage 500 observations from 2 environments with different dynamic backgrounds. All labels are generated by KMeans with the original states as inputs.

\begin{table}
	\caption{Performance comparison with 3 seeds on cartpole-swingup with dynamic backgrounds at 500K steps. $T$ is the reward length for the ablation study.}
	\vskip -0.1in
	\label{table-sum}
	\centering
	\begin{tabular}{c|cccc}
		\toprule
		R Length & RP & RP-Sum & CRESP & CRESP-Sum \\
		\midrule
		$T=1$  & $\textbf{625} \pm \textbf{85}$ & $\textbf{625} \pm \textbf{85}$ & $616 \pm 155$ & $616 \pm 155$\\
		$T=3$  & $645 \pm 55$ & $631 \pm 35$ & $\textbf{666} \pm \textbf{42}$ & $610 \pm 88$\\
		$T=5$  & $575 \pm 111$ & $610 \pm 85$ & $\textbf{687} \pm \textbf{50}$ & $629 \pm 61$\\
		$T=7$  & $654 \pm 97$ & $599 \pm 136$ & $\textbf{667} \pm \textbf{43}$ & $639 \pm 61$\\
		\midrule
		Average  & $625 \pm 94$ & $613 \pm 96$ & $\textbf{658} \pm \textbf{98}$ & $623 \pm 100$\\
		\bottomrule
	\end{tabular}
	\vskip -0.1in
\end{table}

\subsubsection{Reward Sequence Distributions vs Distributions of Sum of Reward Sequences}
\label{app-as}
In Section~\ref{subsec-4.1}, we introduce the notion of reward sequence distributions (RSDs) to determine task relevance. We can also identify the task relevance---learning $T$-level representations---via the distributions of sum of reward sequences. Therefore, we compare the performance of learning reward sequence distributions (CRESP) with that of learning distributions of sum of reward sequences (CRESP-Sum). 
Moreover, we evaluate the method to estimate the expected sums of reward sequences (RP-Sum). We adopt the same hyperparameters and ablate the reward length $T$. 

In Table~\ref{table-sum}, we boldface the results that have highest means. The average performances of different reward lengths of RP-Sum are lower than that of RP. This result shows that the high-dimensional targets can provide more helpful information than one-dimensional targets, which is similar to the effectiveness of knowledge distillation by a soft target distribution.
Moreover, CRESP has higher average performance than CRESP-Sum, outperforming RP and RP-Sum, which empirically demonstrates that learning distributions provides benefits for representation learning in the deep RL setting.

\subsection{Code}
We implement all of our codes in Python version 3.8 and make the code available online~\footnote{\url{https://github.com/MIRALab-USTC/RL-CRESP}}.
We used NVIDIA GeForce RTX 2080 Ti GPUs for all experiments. Each trials of our method was trained for 20 hours on average.

\end{document}